\def\eqref#1{equation~\ref{#1}}
\def\ceil#1{\lceil #1 \rceil}
\def\floor#1{\lfloor #1 \rfloor}
\def\1{\bm{1}}
\def\rx{{\textnormal{x}}}
\def\ry{{\textnormal{y}}}
\def\vx{{\bm{x}}}
\def\vy{{\bm{y}}}
\def\mA{{\bm{A}}}
\def\mR{{\bm{R}}}
\def\mX{{\bm{X}}}
\def\mY{{\bm{Y}}}
\DeclareMathAlphabet{\mathsfit}{\encodingdefault}{\sfdefault}{m}{sl}
\SetMathAlphabet{\mathsfit}{bold}{\encodingdefault}{\sfdefault}{bx}{n}
\def\sR{{\mathbb{R}}}
\definecolor{colorA}{RGB}{189,201,225}
\definecolor{colorB}{RGB}{103,169,207}
\definecolor{colorC}{RGB}{ 28,144,153}
\definecolor{colorD}{RGB}{  1,108, 89}
\newcolumntype{R}{>{\columncolor{gray!40}}r}
\newcolumntype{L}{>{\columncolor{gray!40}}l}
\newcolumntype{C}{>{\columncolor{gray!40}}c}
\useunder{\uline}{\ul}{}
\NewDocumentCommand{\var}{O{s} m O{}}{%
  \ensuremath{#1_{#2}^{#3}}
}
\definecolor{light-gray}{gray}{0.80}
\renewcommand\paragraph{\subsubsection*}
\newcommand\aref{Algorithm \ref}
\newcommand\eref{Eq.~\ref}
\newcommand\fref{Figure ~\ref}
\newcommand\tref{Table~\ref}
\newcommand\sref{Section~\ref}
\def\x{{\bf x}}
\def\0{{\bf 0}}
\newtheorem{mytheorem}{Theorem}
\newtheorem{definition}[mytheorem]{Definition}
\newtheorem{assumption}[mytheorem]{Assumption}
\newtheorem{lemma}[mytheorem]{Lemma}
\newtheorem{fact}[mytheorem]{Fact}
\def\xhat{\widehat \mX}
\def\xcheck{\widecheck \mX}
\def\xtilde{\widetilde \mX}
\def\psib{{\bm{\psi_{B}}}}
\def\mub{\bm{\mu}_{B}}
\def\sigmab{\bm{\sigma}_{B}}
\def\psib{{{\psi_{B}}}}
\def\mub{{\mu}_{B}}
\def\sigmab{{\sigma}_{B}}
\def\xi{\vx_i}
\def\xj{\vx_j}
\def\yi{\vy_i}
\def\xhati{\hat \x_i}
\def\xchecki{\check \x_i}
\def\xhatj{\hat \x_j}
\def\xcheckj{\check \x_j}
\def\loss{\mathcal{L}}
\def\losshat{\widehat{\mathcal{L}}}
\def\1{\vec{1}}
\newcommand{\OURS}{\textsc{PN}\xspace}
\newcommand{\OURSV}{\textsc{PN-V}\xspace}
\newcommand{\transbn}{Transformer$_{\textsc{BN}}$\xspace}
\newcommand{\transln}{Transformer$_{\textsc{LN}}$\xspace}
\newcommand{\transpn}{Transformer$_{\textsc{PN}}$\xspace}
\newcommand{\transpnv}{Transformer$_{\textsc{PN-V}}$\xspace}
\newcommand{\sm}{${\texttt{small}}$\xspace}
\newcommand{\bg}{${\texttt{big}}$\xspace}
\newcommand{\BN}{BN\xspace}
\newcommand{\bn}{BN\xspace}
\begin{document}

\twocolumn[
\icmltitle{PowerNorm: Rethinking Batch Normalization in Transformers}
\icmlsetsymbol{equal}{*}

\begin{icmlauthorlist}
\icmlauthor{Sheng Shen}{equal,berkeley}
\icmlauthor{Zhewei Yao}{equal,berkeley}
\icmlauthor{Amir Gholami}{berkeley}
\icmlauthor{Michael W. Mahoney}{berkeley}
\icmlauthor{Kurt Keutzer}{berkeley}
\end{icmlauthorlist}
\icmlaffiliation{berkeley}{UC Berkeley}

\icmlcorrespondingauthor{Amir Gholami}{amirgh@berkeley.edu}

\icmlkeywords{Machine Learning, Natural Language Processing, Transformers}

\vskip 0.3in
]



\printAffiliationsAndNotice{\icmlEqualContribution} 

\begin{abstract}

The standard normalization method for neural network (NN) models used in Natural Language Processing (NLP) is layer normalization (LN).
This is different than batch normalization (BN), which is widely-adopted in Computer Vision. 
The preferred use of LN in NLP is principally due to the empirical observation that a (naive/vanilla) use of BN leads to significant performance degradation for NLP tasks; however, a thorough understanding of the underlying reasons for this is not always evident.
In this paper, we perform a systematic study of NLP transformer models to understand why BN has a poor performance, as compared to LN.
We find that the statistics of NLP data across the batch dimension exhibit large fluctuations throughout training.
This results in instability, if BN is naively implemented.
To address this, we propose Power Normalization (\OURS), a novel normalization scheme
that resolves this issue by
(i) relaxing zero-mean normalization in BN,
(ii) incorporating a running quadratic mean instead of per batch statistics to stabilize
fluctuations,
and (iii) using an approximate backpropagation for incorporating the running statistics in the forward pass.
We show theoretically, under mild assumptions, that \OURS leads to a smaller Lipschitz constant for the loss, compared with BN. 
Furthermore, we prove that the approximate backpropagation scheme leads to bounded gradients.
We extensively test \OURS for transformers on a range of NLP tasks, and we show that
it significantly outperforms both LN and BN. 
In particular, \OURS outperforms LN by 0.4/0.6 BLEU on IWSLT14/WMT14 and 5.6/3.0 PPL on PTB/WikiText-103. We make our code publicly available at \url{https://github.com/sIncerass/powernorm}.

\end{abstract}
\section{Introduction}

Normalization has become one of the critical components in Neural Network (NN) architectures
for various machine learning tasks, in particular
in Computer Vision (CV) and Natural Language Processing (NLP). 
However, currently there are very different forms of normalization used in CV and NLP. 
For example, Batch Normalization (BN)~\cite{ioffe2015batch} is widely adopted in CV, but it leads to significant performance degradation when naively used in NLP. 
Instead, Layer Normalization (LN)~\cite{ba2016layer} is the standard normalization scheme used in NLP.
All recent NLP architectures, including Transformers~\cite{vaswani2017attention},
have incorporated LN instead of BN as their default normalization scheme.
In spite of this, the reasons why BN fails for NLP have not been clarified, and a better alternative to LN has not been presented.

In this work, we perform a systematic study of the challenges associated with BN for NLP, and based on this we propose Power Normalization (\OURS), a novel normalization method that significantly outperforms LN. 
In particular, our contributions are as follows:

\begin{itemize}
    \item 
    We find that there are clear differences in the batch statistics of NLP data versus CV data.
    In particular, we observe that batch statistics for NLP data have a very large variance throughout training. This variance exists in the corresponding gradients as well. In contrast, CV data exhibits orders of magnitude smaller variance. 
    See~\fref{fig:cvnlp-fwd-bn} and~\ref{fig:cvnlp-bwd-bn} for a comparison of BN in CV and NLP. 
    
    \item 
    To reduce the variation of batch statistics, we modify typical BN
    by relaxing zero-mean normalization, and we replace the variance with the quadratic mean.
    We denote this scheme as \OURSV.
    We show theoretically that \OURSV preserves the first-order smoothness property as in BN; see Lemma~\ref{lemma:lipschitz_constant_of_pnv}. 
    
    \item 
    We show that using running statistics for the quadratic mean results in significantly better performance, up to 1.5/2.0 BLEU on IWSLT14/WMT14 and 7.7/3.4 PPL on PTB/WikiText-103, as compared to BN; see~\tref{table:translation} and~\ref{table:language_model}.
    We denote this scheme as \OURS. Using running statistics requires correcting the typical backpropagation
    scheme in BN. As an alternative, we propose an approximate backpropagation to capture the running statistics.
    We show theoretically that this approximate backpropagation leads to bounded gradients, which is
    a necessary condition for convergence; see Theorem~\ref{thm:gradient_xtilde_bound}.

    \item We perform extensive tests showing that 
    \OURS also improves performance on machine translation and language modeling tasks, as compared to LN.
    In particular, \OURS outperforms LN by 0.4/0.6 BLEU on IWSLT14/WMT14, and by 5.6/3.0 PPL on PTB/WikiText-103.
    We emphasize that the improvement of \OURS over LN is without any change of hyper-parameters.
    
    \item We analyze the behaviour of \OURS and LN by computing the Singular Value
    Decomposition of the resulting embedding layers, and we show that \OURS leads to a more well-conditioned
    embedding layer; see~\fref{figure:ln-pn-emb-svd}.
    Furthermore, we show that \OURS is robust to small-batch statistics, and it still achieves higher performance,
    as opposed to LN; see~\fref{figure:ablation}. 
\end{itemize}

\begin{figure}[!htp]
\begin{center}
  \includegraphics[width=.48\linewidth, trim={1.5cm 1.5cm 11.5cm 1.5cm},clip]{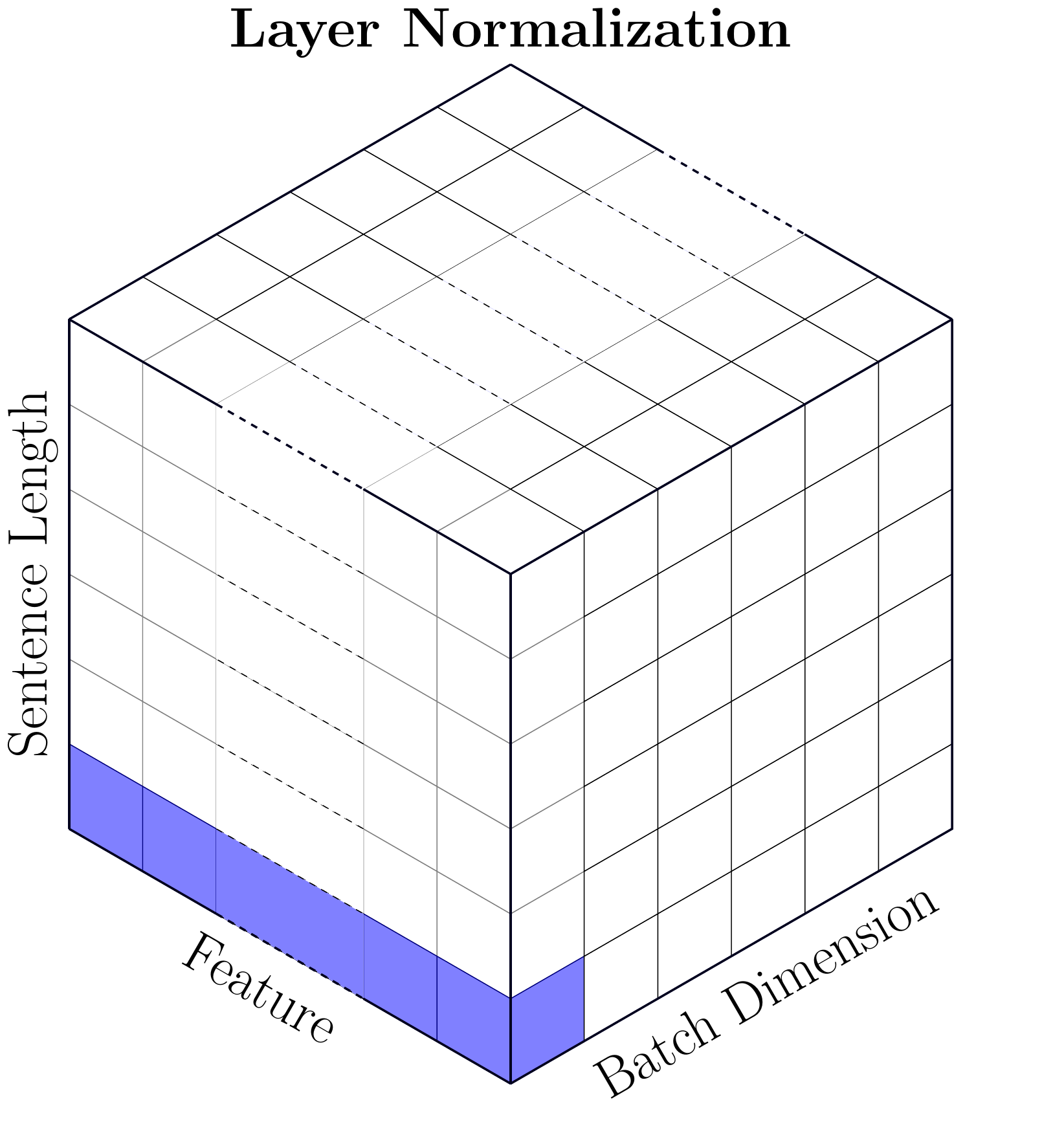}
  \includegraphics[width=.48\linewidth, trim={1.5cm 1.5cm 11.5cm 1.5cm},clip]{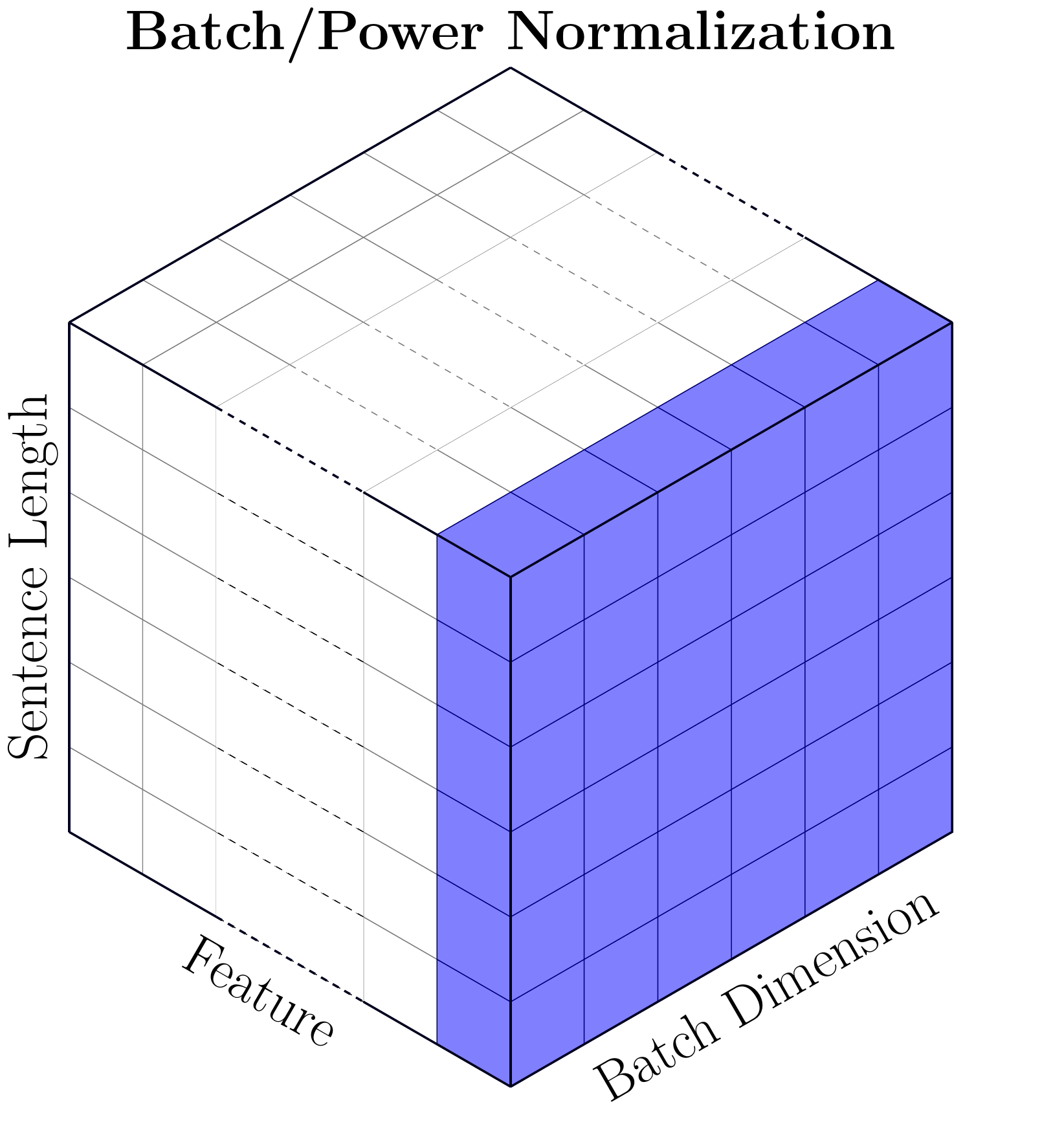}
\end{center}
 \caption{\footnotesize The illustration of layer normalization (left) and batch/power normalization (right). 
 The entries colored in blue show the components used for calculating the
 statistics.
 }
\label{fig:ln-pn-visualition}
\end{figure}

\section{Related Work}

Normalization is widely used in modern deep NNs such as ResNet~\cite{he2016deep}, MobileNet-V2~\cite{sandler2018mobilenetv2}, and DenseNet~\cite{huang2017densely} in CV, as well as LSTMs~\cite{hochreiter1997long,ba2016layer}, transformers~\cite{vaswani2017attention}, and transformer-based models~\cite{devlin2018bert,liu2019roberta} in NLP. 
There are two main categories of normalization: weight normalization~\cite{salimans2016weight,miyato2018spectral,qiao2019weight} and activation normalization~\cite{ioffe2015batch,jarrett2009best,krizhevsky2012imagenet,ba2016layer,ulyanov2016instance,wu2018group,li2019positional}. 
Here, we solely focus on the latter, and we briefly review related work in CV and NLP.

\paragraph{Normalization in Computer Vision}
Batch Normalization (BN)~\cite{ioffe2015batch}
has become the de-facto normalization for NNs used in CV.
BN normalizes the activations (feature maps) by computing channel-wise mean and variance across
the batch dimension, as schematically shown in~\fref{fig:ln-pn-visualition}.
It has been found that BN leads to robustness
with respect to sub-optimal hyperparameters (e.g., learning rate) and initialization, and it generally results in more stable training for CV tasks~\cite{ioffe2015batch}.
Following the seminal work of~\cite{ioffe2015batch}, there have been two principal lines of research:
(i) extensions/modifications of BN to improve its performance,
and (ii) theoretical/empirical studies to understand why BN helps training.

With regard to (i), it was found that BN does not perform well for problems that need
to be trained with small batches, e.g., image segmentation (often due to memory limits)~\cite{zagoruyko2016wide,lin2017focal,goldberger2005neighbourhood}.
The work of~\cite{ioffe2017batch} proposed batch renormalization to remove/reduce
the dependence of batch statistics to batch size. It was shown that this
approach leads to improved performance for small batch training as well as cases
with non-i.i.d. data. 
Along this direction, the work of~\cite{singh2019evalnorm} proposed ``EvalNorm,''
which uses corrected normalization statistics. Furthermore, the recent
work of~\cite{yan2020towards} proposed ``Moving Average Batch Normalization (MABN)'' for small
batch BN by replacing batch statistics with moving averages. 

There has also been work on alternative normalization techniques, and in particular Layer Normalization (LN), proposed by~\cite{ba2016layer}. LN normalizes across the channel/feature dimension as shown in~\fref{fig:ln-pn-visualition}.
This could be extended to Group Norm (GN)~\cite{wu2018group}, where the normalization is performed
across a partition of the features/channels with different pre-defined groups.
Instance Normalization (IN)~\cite{ulyanov2016instance} is another technique, where
per-channel statistics are computed for each sample.

With regard to (ii), there have been several studies to understand why BN helps training in CV.
The original motivation was that BN reduces the so-called ``Internal Covariance Shift'' (ICS)~\cite{ioffe2015batch}. However, this
explanation was viewed as incorrect/incomplete~\cite{alirahimi}. In particular, the
recent study of~\cite{santurkar2018does} argued that the underlying reason that BN helps training
is that it results in a smoother loss landscape. This was later confirmed for deep NN models by measuring
the Hessian spectrum of the network with/without BN~\cite{yao2019pyhessian}.

\paragraph{Normalization in Natural Language Processing } 
Despite the great success of BN in CV, the large computation and storage overhead of BN at each time-step in recurrent neural networks (RNNs) made it impossible/expensive to deploy
for NLP tasks~\cite{cooijmans2016recurrent}. 
To address this, the work of~\cite{cooijmans2016recurrent,hou2019normalization} used
shared BN statistics across different time steps of RNNs. However, it was found that the performance of BN
is significantly lower than LN for NLP. For this reason, LN became the
default normalization technique, even for the recent transformer models introduced by~\cite{vaswani2017attention}.

Only limited recent attempts were made to compare LN with other alternatives or investigate the reasons behind the success of LN in transformer models. 
For instance, \cite{zhang2019root} proposes RMSNorm, which removes the re-centering invariance in LN and performs re-scaling invariance with the root mean square summed of the inputs. 
They showed that this approach achieves similar performance to LN, but with smaller (89\% to 64\%) overhead.
Furthermore,
\cite{nguyen2019transformers} studies different variants of weight normalization for transformers in low-resource machine translation. 
The recent work of \cite{xu2019understanding} studies
why LN helps training, and in particular it finds that the derivatives
of LN help recenter and rescale backward gradients.
From a different angle,~\cite{zhang2019fixup,zhang2019improving} try to ascribe the benefits of LN
to solving the exploding and vanishing gradient problem at the beginning of training. 
They also propose two properly designed initialization schemes which also enjoy that property and are able to stabilize training for transformers. 

However, most of these approaches achieve similar or marginal improvement over LN.
More importantly, there is still not a proper understanding of why BN performs poorly for transformers applied to NLP data.
Here, we address this by systematically studying the BN behavior throughout training; and, based on our results, we propose Power Normalization (\OURS), a new normalization method that significantly outperforms LN for a wide range of tasks in NLP.

\begin{figure*}[t]
\begin{center}
  {\label{subfig:cvnlp-fwd-bn-mean}\includegraphics[width=.49\linewidth]{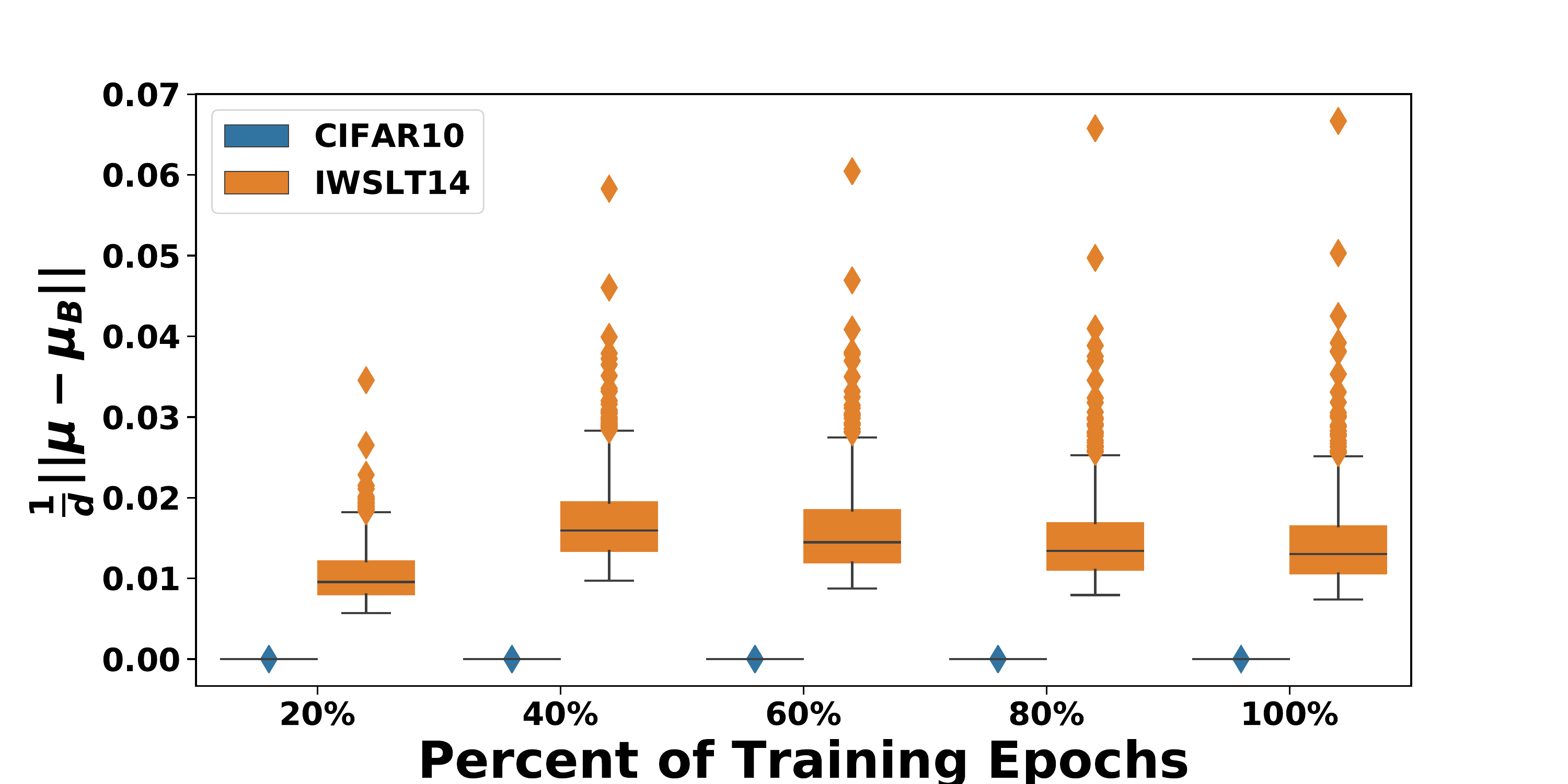}}
  {\label{subfig:cvnlp-fwd-bn-var}\includegraphics[width=.49\linewidth]{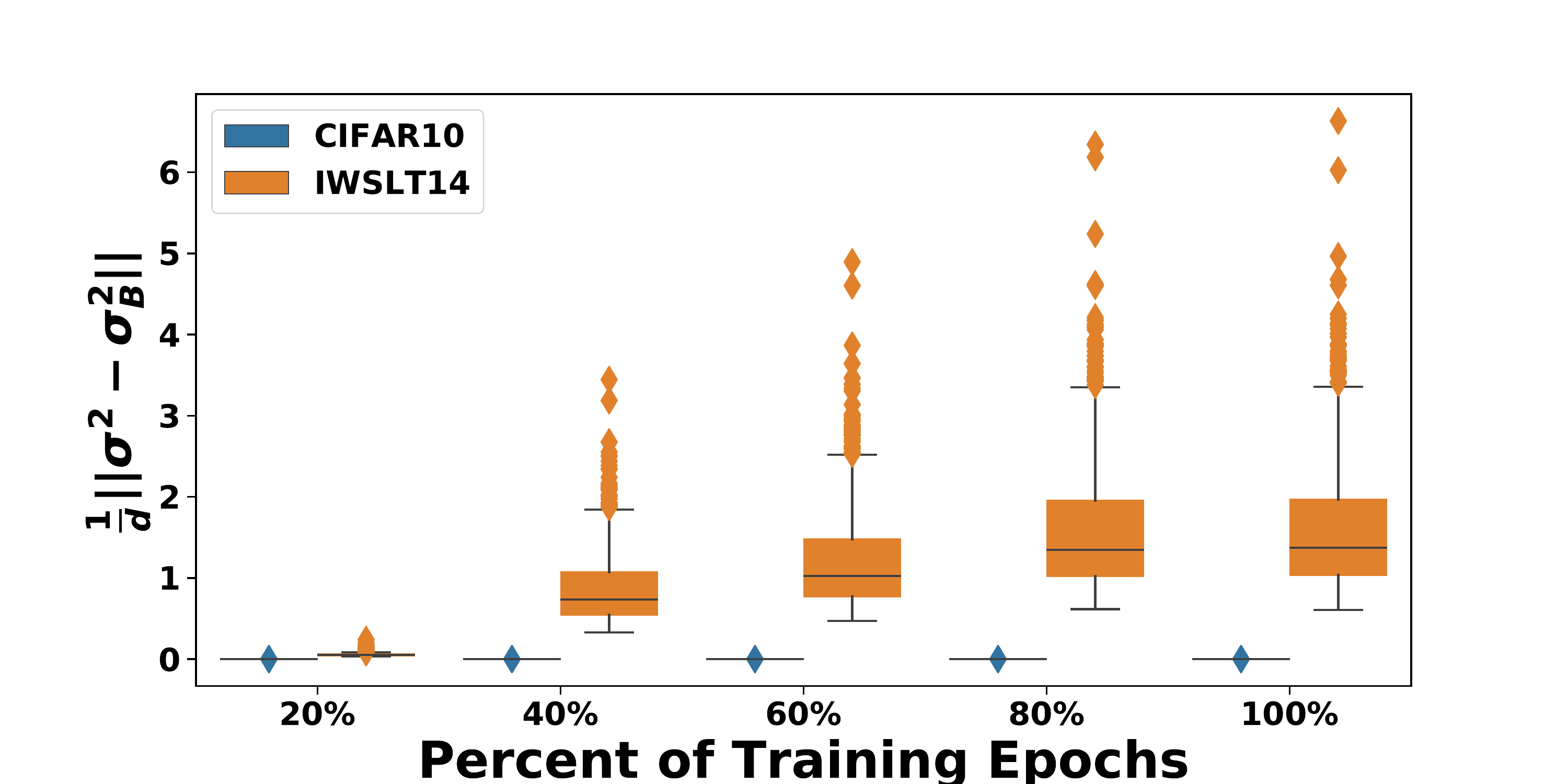}}
\end{center}
 \caption{\footnotesize 
 The average Euclidean distance between the batch statistics ($\mub$, $\sigmab^2$) and the running statistics ($\mu$, $\sigma^2$) stored in first BN during forward pass for ResNet20 on Cifar-10 and Transformer on IWLST14.  
 We can clearly see that the ResNet20 statistics have orders of magnitude smaller variation than the running statistics
 throughout training. However, the corresponding statistics in \transbn exhibit very high
 variance with extreme outliers. This is true both for the mean (shown in the left) as well as variance (shown in right).
 This is one of the contributing factors to the low performance of BN in transformers.
 }
\label{fig:cvnlp-fwd-bn}
\end{figure*}

\begin{algorithm}[!ht]
\DontPrintSemicolon
\caption{Batch Normalization (Every Iteration)}
\label{alg:batch_normalization}
    \SetAlgoLined
    \Begin(\textbf{Forward Propagation:}){
    
    \KwInput{$\mX\in\mR^{B\times d}$}
    \KwOutput{$\mY\in\mR^{B\times d}$}
    
    $\mub = \frac{1}{B}\sum_{i=1}^B \xi$ \tcp*{Get mini-batch mean}
    
    $\sigmab^2 = \frac{1}{B}\sum_{i=1}^B (\xi-\mub)^2$ \tcp*{Get mini-batch variance}
    
    $\xcheck = \frac{\mX-\mub}{\sigmab}$ \tcp*{Normalize}
    
    $\mY = \gamma\odot\xcheck + \beta$ \tcp*{Scale and shift}
    
    \vspace{0.1in}
    
    $\mu = \alpha\mu + (1-\alpha)\mub$ \tcp*{Update running mean}
    
    $\sigma^2 = \alpha\sigma^2 + (1-\alpha)\sigmab^2$ \tcp*{Update running variance}
    
    }

    \Begin(\textbf{Backward Propagation:}){
    
    \KwInput{$\frac{\partial \loss}{\partial \mY}\in\mR^{B\times d}$}
    \KwOutput{$\frac{\partial \loss}{\partial \mX}\in\mR^{B\times d}$}
    $\frac{\partial \loss}{\partial \mX}$ based on~\eref{eq_m:gradient_loss_x_in_bn} \tcp*{Gradient of $\mX$}
    }

    \hrulefill

    \textbf{Inference: } $\mY=\gamma\odot\frac{\mX-\mu}{\sigma}+\beta$

\end{algorithm}

\section{Batch Normalization}
\label{sec:bkg_batch_normazliation}

\textbf{Notation.}
We denote the input of a normalization layer as $\mX\in \sR^{B\times d}$,
where $d$ is the embedding/feature size and $B$ is the batch size\footnote{For NLP tasks, we flatten sentences/word in one dimension, i.e., the batch size actually corresponds to all \textbf{non-padded} words in a training batch.}. 
We denote $\loss$ as the final loss of the NN. 
The i-th row (column) of a matrix, e.g., $\mX$, is denoted by $\mX_{i, :}$ ($\mX_{:, i}$). 
We 
also write the i-th row of the matrix as its lower-case version, i.e., $\xi=\mX_{i, :}$. 
For a vector $\ry$, $y_i$ denotes the i-th element in~$\ry$.

Without other specification: 
(i) for two vector $\rx\in\sR^d$ and $\ry\in\sR^d$, we denote $\rx\ry$ as the element-wise product, $\rx+\ry$ as the element-wise sum, and $\langle \rx, \ry\rangle$ as the inner product; 
(ii) for a vector $\ry\in\sR^d$ and a matrix $\mX\in\sR^{B\times d}$, we denote $\ry\odot\mX$ as $[y_1\mX_{:, 1}, ..., y_d\mA_{:, d}]$ and $\ry+\mX$ as $[\ry+\mX_{1,:};...;\ry+\mX_{B,:} ]$; and
(iii) for a vector $\ry\in\sR^d$, $\ry > C$ means that each entry of $\ry$ is larger than the constant $C$, i.e., $y_i>C$ for all $i$.

\subsection{Formulation of Batch Normalization}
\label{sec:batch_normalization}

We briefly review the formulation of BN~\cite{ioffe2015batch}.
Let us denote the mean (variance) of $\mX$ along the batch dimension as $\mub\in\sR^{d}$ ($\sigmab^2\in\sR^{d}$).
The batch dimension is illustrated in~\fref{fig:ln-pn-visualition}.
The BN layer first enforces zero mean and unit variance, and it then performs an affine transformation by scaling the result by $\gamma,\beta\in\sR^d$, as shown in~\aref{alg:batch_normalization}.

The Forward Pass (FP) of BN is performed as follows. 
Let us denote the intermediate result of BN with zero mean and unit variance as $\xcheck$, i.e.,
\small
\begin{equation}\label{eq:xcheck_defination}
    \xcheck = \frac{\mX-\mub}{\sigmab}.
\end{equation}
\normalsize
The final output of BN, $\mY$, is then an affine transformation applied to $\xcheck$:
\small
\begin{equation}\label{eq:y_defination}
    \mY = \gamma \odot \xcheck + \beta. 
\end{equation}
\normalsize

The corresponding Backward Pass (BP) can then be derived as follows.
Assume that the derivative of $\loss$ with respect to $\mY$ is given, i.e., $\frac{\partial \loss}{\partial \mY}$ is known.
Then, the derivative with respect to input can be computed as:

\small
\begin{equation}
    \label{eq_m:gradient_loss_x_in_bn}
    \frac{\partial \loss}{\partial \xi} = \frac{\gamma}{\sigmab}\frac{\partial \loss}{\partial \vy_i} - \frac{\gamma}{\sigmab B}\sum_{j \in B}(\underbrace{\frac{\partial \loss}{\partial \vy_j}}_\text{from $\mub$: $g_{\mu}$}+
    \underbrace{\frac{\partial \loss}{\partial \vy_j}\xcheckj\xchecki}_\text{from $\sigmab^2$: $g_{\sigma^2}$}).
\end{equation}
\normalsize
See Lemma~\ref{lemma:bn_gradient_x} in Appendix \ref{sec:theoretical_proof} for details.
We denote the terms contributed by $\mub$ and $\sigmab^2$ as $g_{\mu}$ and $g_{\sigma^2}$, respectively.

In summary, there are four batch statistics in BN, two in FP and two in BP. 
The stability of training is highly dependent on these four parameters.
In fact, naively implementing the BN as above for transformers leads to poor performance.
For example, using transformer with BN  (denoted as \transbn) results in 1.1 and 1.4 lower BLEU score, as compared to the transformer with LN (\transln), on IWSLT14 and WMT14, respectively; see~\tref{table:translation}.

This is significant performance degradation, and it stems from instabilities
associated with the above four batch statistics.
To analyze this, we studied the batch statistics using the standard setting of ResNet20  on Cifar-10 and \transbn on IWSLT14 (using a standard batch size of 128 and tokens of 4K, respectively).
In the first experiment, we probed the fluctuations between batch statistics, $\mub$/$\sigmab$, and the corresponding BN running statistics, $\mu$/$\sigma$, throughout training.
This is shown for the first BN layer of ResNet20 on Cifar-10 and \transbn on IWSLT14 in~\fref{fig:cvnlp-fwd-bn}.
Here, the y-axis shows the 
average Euclidean distance between batch statistics ($\mub$, $\sigmab$) and the running statistics ($\mu$, $\sigma$), and the x-axis is different epochs of training, where we define the average Euclidean distance as $\text{dist}(\mub, \mu) = \frac{1}{d}\|\mub-\mu\|$.

The first observation is that \transbn shows significantly larger distances between the batch statistics and the running statistics than ResNet20 on Cifar-10, which exhibits close to zero fluctuations. 
Importantly, this distance between $\sigmab$ and $\sigma$ significantly increases throughout training, but with extreme outliers.
During inference, we have to use the running statistics. However, such large fluctuations would lead
to a large inconsistency between statistics of the testing data and the BN's running~statistics.

The second observation comes from probing the norm of
$g_{\mu}$ and $g_{\sigma^2}$ defined in~\eref{eq_m:gradient_loss_x_in_bn}, which contribute
to the gradient backpropagation of input. 
These results are shown in~\fref{fig:cvnlp-bwd-bn}, where we report
the norm of these two parameters for ResNet20 and \transbn.
For \transbn, we can see very large outliers that actually persist throughout training.
This is in contrast to ResNet20, for which the outliers vanish as training~proceeds.

\begin{figure*}[t]
\begin{center}
  {\label{subfig:cvnlp-bwd-bn-mean}\includegraphics[width=.49\linewidth]{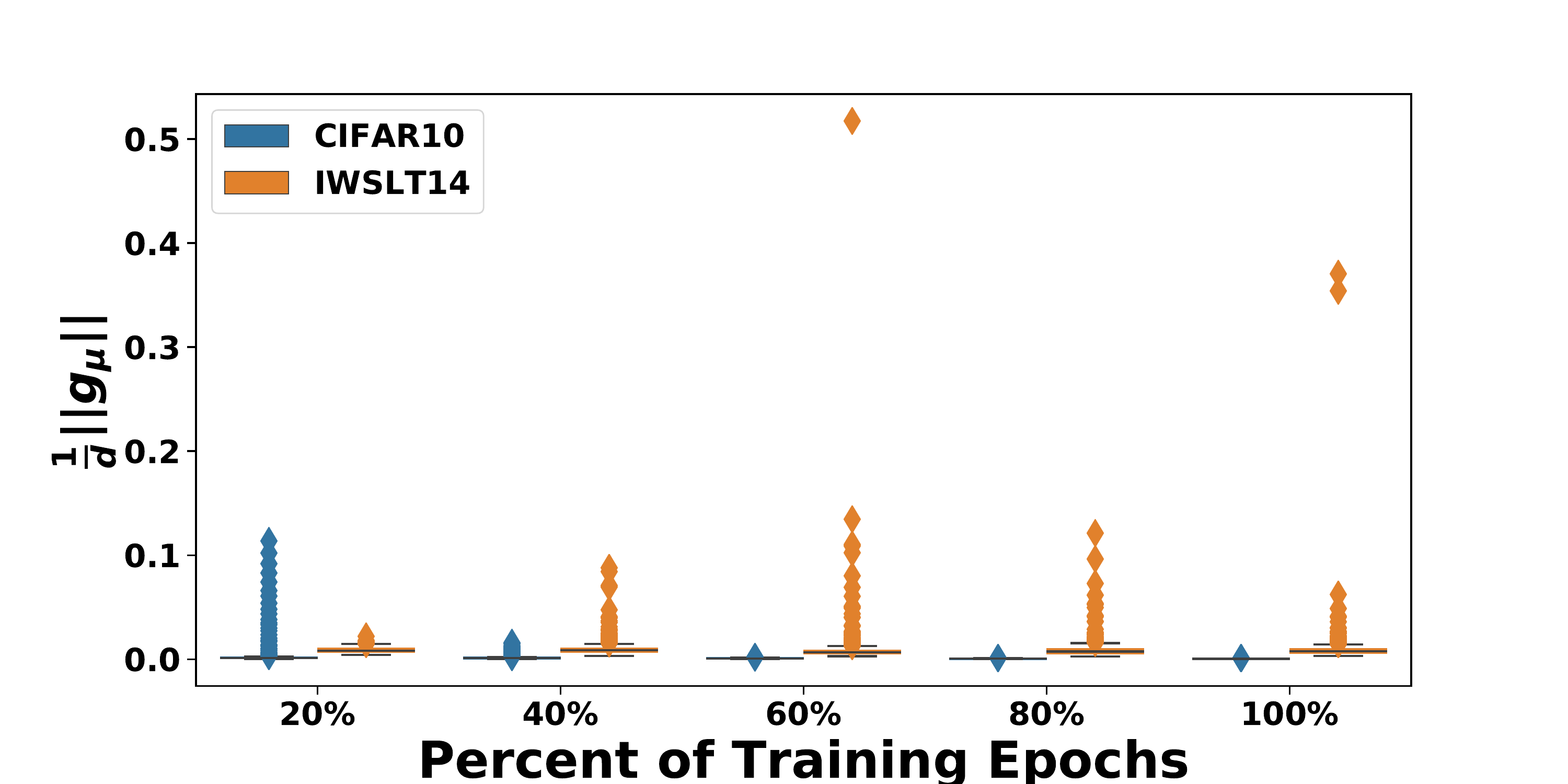}}
  {\label{subfig:cvnlp-bwd-bn-var}\includegraphics[width=.49\linewidth]{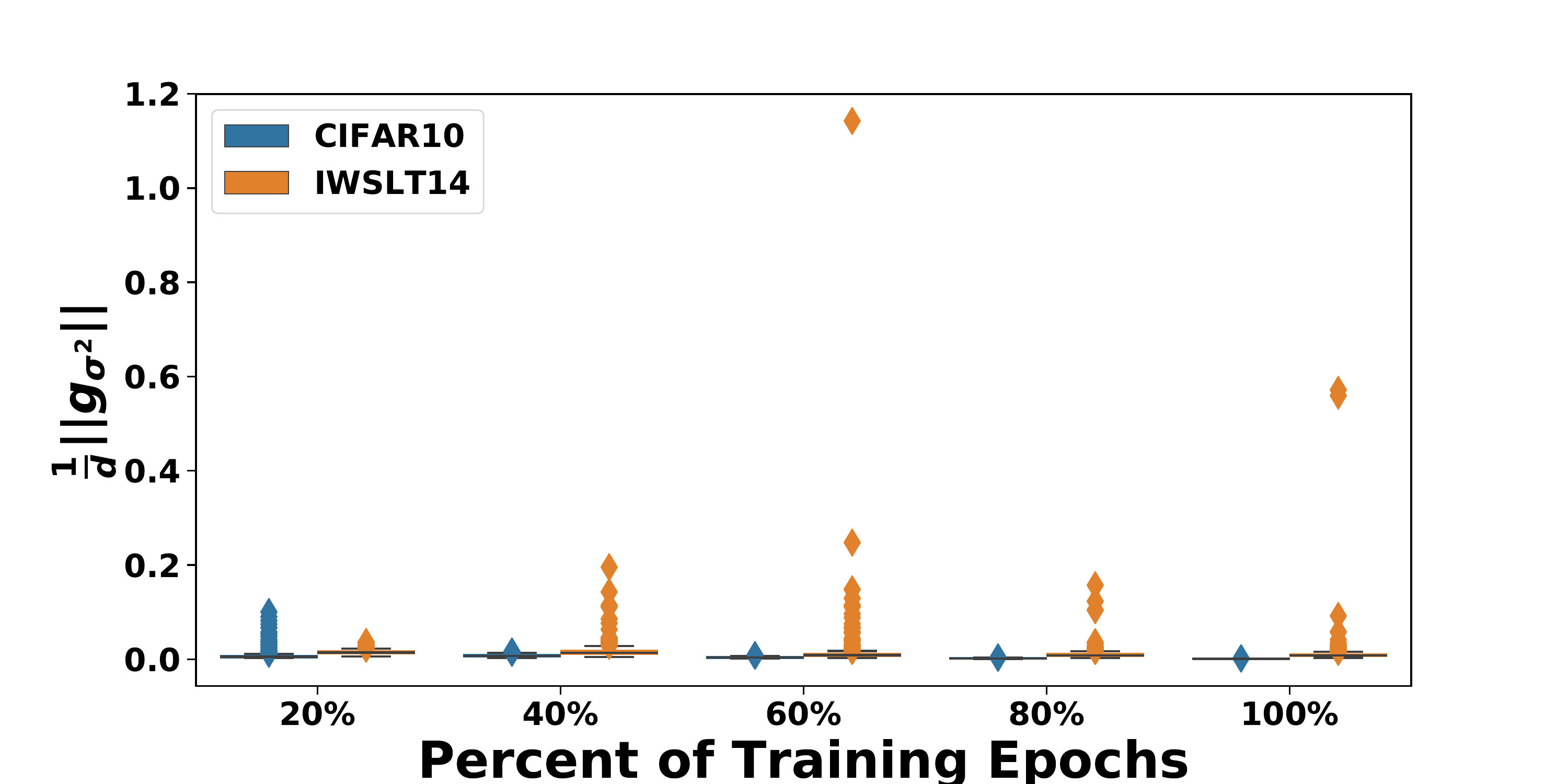}}
\end{center}
 \caption{\footnotesize 
 The average gradient norm of the input of the first BN layer contributed by $\mub$ and $\sigmab$ for ResNet20 on Cifar10 and \transbn on IWSLT14 during the BP (note that $d=16$ for Cifar-10 and $d=512$ for IWSLT experiment). 
 It can be clearly seen that the norm of $g_{\mu}$ and $g_{\sigma^2}$ for ResNet20 has orders of magnitude smaller variation
 throughout training, as compared to that for \transbn. 
 Also, the outliers for ResNet20 vanish at the end of training, which is in contrast to \transbn, for which the outliers persist.  
 This is true both for $g_{\mu}$ (shown in left) as well as $g_{\sigma^2}$ (shown in right).
 }
\label{fig:cvnlp-bwd-bn}
\end{figure*}
\section{Power Normalization}
\label{sec:method}

\begin{figure*}[t]
\begin{center}
  {\label{subfig:bn-pn-fwd}\includegraphics[width=.49\linewidth]{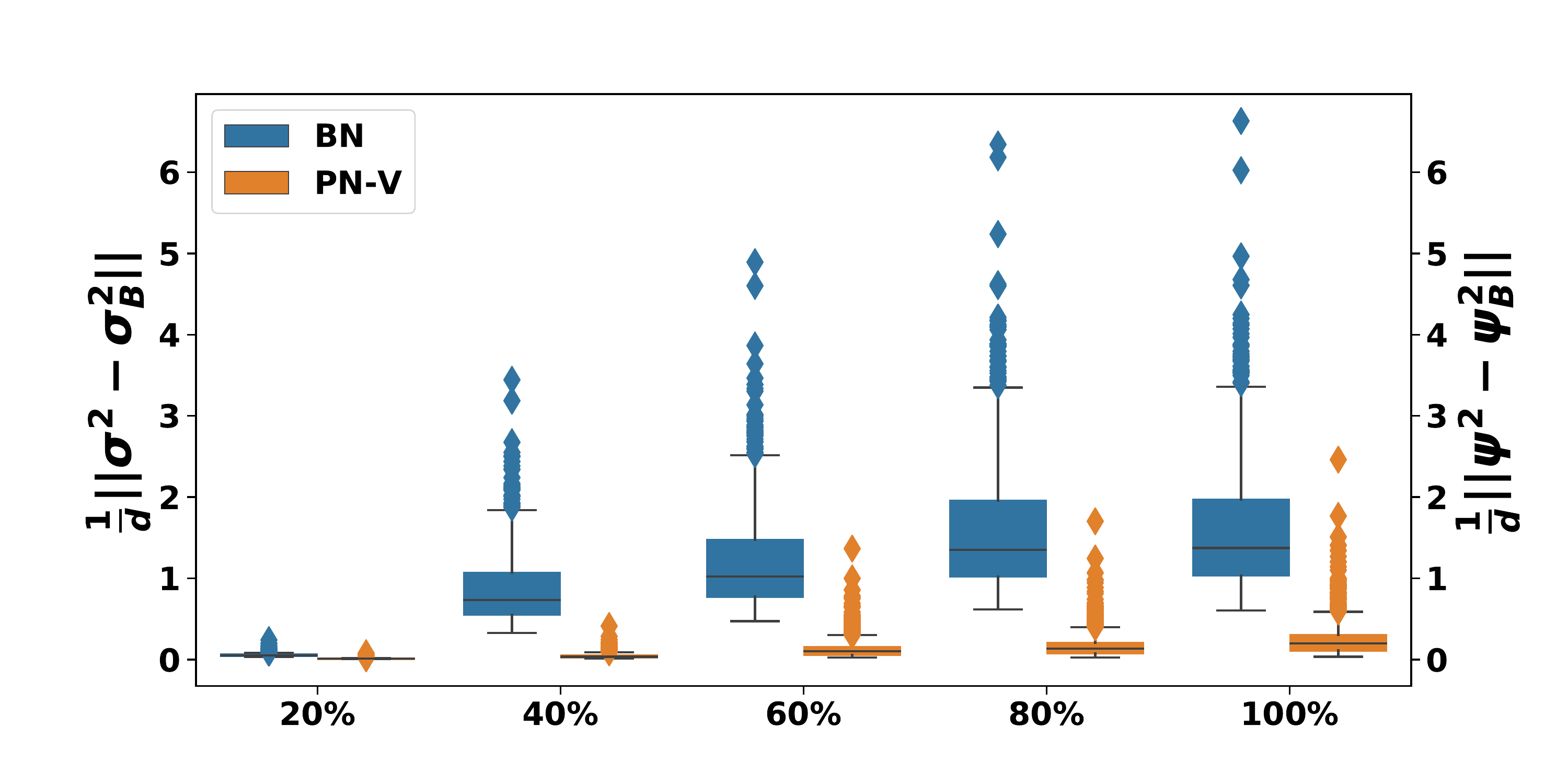}}
  {\label{subfig:bn-pn-bwd}\includegraphics[width=.49\linewidth]{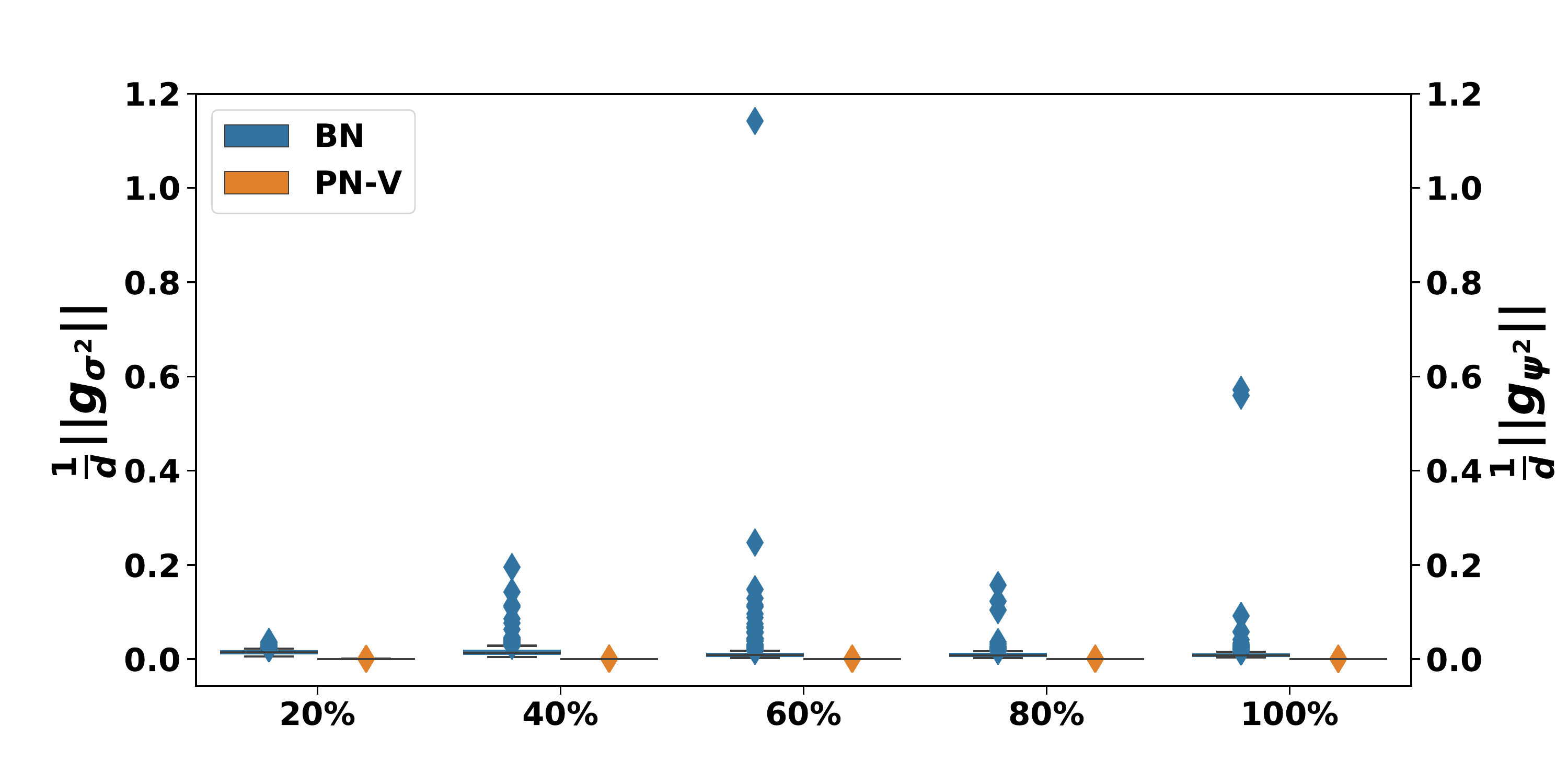}}
\end{center}
 \caption{\footnotesize 
 Results for Transformer on IWSLT14. 
 (Left) The average Euclidean distance between batch statistics ($\sigmab$, $\psib$) and the running statistics ($\sigma$, $\psi$) stored in first BN/\OURSV during forward propagation (FP).   
 (Right) The average norm of gradient of the input of the first BN/\OURSV contributed by $\sigmab$/$\psib$.  
 During FP, $\psib$ has much smaller variations of running statistics, as compared to $\sigmab$, as shown in left,
 It can also be clearly seen that during BP, the norm of $g_{\psi^2}$ exhibits many fewer outliers, as compared to $g_{\sigma^2}$, throughout the training. 
 }
\label{fig:bn-pn-fwd-bwd}
\end{figure*}

Based on our empirical observations, we propose Power Normalization (\OURS), which effectively resolves the performance degradation of BN.
This is achieved by incorporating the following two changes to BN.
First, instead of enforcing unit variance, we enforce unit quadratic
mean for the activations.
The reason for this is that we find that enforcing zero-mean and unit variance in BN is detrimental due to the large variations in the mean, as discussed in the previous section.
However, we observe that unlike mean/variance, the unit quadratic mean is significantly
more stable for transformers. 
Second, we incorporate running statistics for the quadratic mean of the signal,
and we incorporate an approximate backpropagation method to compute the corresponding gradient.
We find that the combination of these two changes leads to a significantly more effective
normalization, with results that exceed LN, even when the same training hyper-parameters are used.
Below we discuss each of these two components.

\subsection{Relaxing Zero-Mean and Enforcing Quadratic Mean}
\label{sec:rm_mean}
Here, we describe the first modification in PN.
As shown in~\fref{fig:cvnlp-fwd-bn} and \ref{fig:cvnlp-bwd-bn}, $\mub$ and $g_\mu$ exhibit significant number of large outliers, which leads to inconsistencies between training and inference statistics. 
We first address this by relaxing the zero-mean normalization, and we use the quadratic mean of the signal, instead of its variance. 
The quadratic mean exhibits orders of magnitude smaller fluctuations, as shown in~\fref{fig:bn-pn-fwd-bwd}.
We refer to this normalization (i.e., no zero mean and unit quadratic mean enforcement) as \OURSV, defined as follows.

\begin{definition}[\OURSV]
Let us denote the quadratic mean of the batch as $\psib^2 = \frac{1}{B}\sum_{i=1}^B \vx_i^2$.
Furthermore, denote $\xhat$ as the signal scaled by $\psib$, i.e.,
\small
\begin{equation}\label{eq:xhat_defination}
    \xhat = \frac{\mX}{\psib}. 
\end{equation}
Then, the output of \OURSV is defined as
\normalsize
\small
\begin{equation}\label{eq:y_defination2}
    \mY = \gamma \odot \xhat + \beta ,
\end{equation}
\normalsize
where $\gamma\in \sR^{d}$ and $\beta\in \sR^{d}$ are two parameters (vectors) in \OURSV (which is the same as in the affine transformation used in BN).

Note that here we use the same notation $\mY$ as the output in~\eref{eq:y_defination} without confusion. 
\end{definition}

The corresponding BP of  \OURSV is as follows: 
\small
\begin{equation}
    \label{eq_m:gradient_loss_x_in_pnv}
    \frac{\partial \loss}{\partial \xi} = \frac{\gamma}{\psib}\frac{\partial \loss}{\partial \vy_i} - \frac{\gamma}{B\psib}\sum_{j\in B}\underbrace{\frac{\partial \loss}{\partial \vy_j} \xhatj\xhati}_\text{from $\psib^2$: $g_{\psi^2}$}  .
\end{equation}
\normalsize
See Lemma~\ref{lemma:oursv_gradient_x} in Appendix \ref{sec:theoretical_proof} for the full details.
Here, $g_{\psi^2}$ is the gradient attributed by $\psib^2$. 
Note that, compared to BN, there exist only two batch statistics in FP and BP: $\psib^2$ and $g_{\psi^2}$.
This modification removes the two unstable factors corresponding to $\mub$ and $\sigmab$ in BN ($g_{\mu}$, and $g_{\sigma^2}$ in~\eref{eq_m:gradient_loss_x_in_bn}).
This modification also results in significant performance improvement, as reported in~\tref{table:translation} for IWSLT14 and WMT14.
By directly replacing BN with \OURSV (denoted as \transpnv), the BLEU score increases from 34.4 to 35.4 on IWSLT14, and 28.1 to 28.5 on WMT14.
These improvements are significant for these two tasks. For example, \cite{zhang2019fixup,zhang2019improving} only improves the BLEU score by 0.1 on IWSLT14. 

As mentioned before, $\psib$ exhibits orders of magnitude smaller variations, as compared to $\sigmab$.
This is shown in~\fref{fig:bn-pn-fwd-bwd},
where we report the distance between the running statistics for $\sigma$, $\text{dist}(\sigmab^2, \sigma^2)$, and $\psi$, $\text{dist}(\psib^2, \psi^2)$.
Similarly during BP, we compute the norm of $g_{\sigma^2}$ and $g_{\psi^2}$, and we report it in~\fref{fig:bn-pn-fwd-bwd} throughout training. 
It can be clearly seen that during BP, the norm of $g_{\psi^2}$ exhibits many fewer outliers as compared to $g_{\sigma^2}$.

In~\cite{santurkar2018does}, the authors provided theoretical results suggesting that employing BN in DNNs can lead to a smaller Lipschitz constant of the loss. 

It can be shown that \OURSV also exhibits similar behaviour, under mild assumptions.
In more detail, let us denote the loss of the NN without normalization as $\losshat$. 
With mild assumptions, \cite{santurkar2018does} shows that the norm of $\frac{\partial \loss}{\partial \mX}$ (with BN) is smaller than the norm of $\frac{\partial \losshat}{\partial \mX}$.
Here, we show that, under the same assumptions, \OURSV can achieve the same results that \bn does. 
See Appendix~\ref{sec:theoretical_proof} for details, including the statement of Assumption~\ref{ass:assumption_for_lipschitz_constant_of_pnv}.

\begin{lemma}[The effect of \OURSV on the Lipschitz constant of the loss]
\label{lemma:lipschitz_constant_of_pnv}
Under Assumption~\ref{ass:assumption_for_lipschitz_constant_of_pnv}, we have
\small
\begin{equation}
    \|\frac{\partial \loss}{\partial \mX_{:, i}}\|^2 = \frac{\gamma^2_i}{(\psib)_i^2}\left(\|\frac{\partial \losshat}{\partial \mX_{:, i}}\|^2 -  \langle\frac{\partial \losshat}{\partial \mX_{:, i}}, \frac{\xhat_{:, i}}{\sqrt B}\rangle^2\right).
\end{equation}
\normalsize
\end{lemma}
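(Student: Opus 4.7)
The plan is to proceed by direct computation, starting from the backward-pass identity for \OURSV that was already derived in~\eref{eq_m:gradient_loss_x_in_pnv}, and then rearranging the resulting expression into the column-wise squared norm appearing in the statement. The key observation that makes this clean is that $\xhat$ is normalized to have unit quadratic mean along the batch dimension, so $\|\xhat_{:, i}\|^2 = B\,(\psib)_i^2/(\psib)_i^2 = B$; this is what allows the $1/\sqrt{B}$ rescaling to appear naturally in the inner-product term on the right-hand side.

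First, I would rewrite~\eref{eq_m:gradient_loss_x_in_pnv} in column form, grouping the terms associated with the $i$-th feature. This gives
\begin{equation*}
    \frac{\partial \loss}{\partial \mX_{:, i}} \;=\; \frac{\gamma_i}{(\psib)_i}\!\left(\frac{\partial \loss}{\partial \mY_{:, i}} \;-\; \frac{1}{B}\,\Bigl\langle \frac{\partial \loss}{\partial \mY_{:, i}},\, \xhat_{:, i}\Bigr\rangle\, \xhat_{:, i}\right).
\end{equation*}
Next, I would invoke Assumption~\ref{ass:assumption_for_lipschitz_constant_of_pnv} (which, following the template of~\cite{santurkar2018does}, identifies the downstream network with and without the normalization layer) to replace $\partial \loss / \partial \mY_{:, i}$ by $\partial \losshat / \partial \mX_{:, i}$. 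This reduces the task to analyzing a single explicit linear operation applied to $\partial \losshat / \partial \mX_{:, i}$: subtract its scalar projection onto $\xhat_{:, i}$, then rescale by $\gamma_i/(\psib)_i$.

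Then I would take the squared Euclidean norm and expand the binomial. Three terms appear: the $\|\partial\losshat/\partial\mX_{:, i}\|^2$ piece, a cross term equal to $-\tfrac{2}{B}\langle \partial\losshat/\partial\mX_{:, i}, \xhat_{:, i}\rangle^2$, and a square term $\tfrac{1}{B^2}\langle \partial\losshat/\partial\mX_{:, i}, \xhat_{:, i}\rangle^2\,\|\xhat_{:, i}\|^2$. Applying the unit-quadratic-mean identity $\|\xhat_{:, i}\|^2 = B$ collapses the last two terms into $-\tfrac{1}{B}\langle \partial\losshat/\partial\mX_{:, i}, \xhat_{:, i}\rangle^2$, which is exactly $-\langle \partial\losshat/\partial\mX_{:, i}, \xhat_{:, i}/\sqrt{B}\rangle^2$. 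Pulling out the overall scalar $\gamma_i^2/(\psib)_i^2$ yields the stated identity.

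I do not expect a genuine obstacle here; the computation is essentially algebraic once the BP formula and the unit-quadratic-mean identity are in hand. The only subtle point is the role of Assumption~\ref{ass:assumption_for_lipschitz_constant_of_pnv}: without it, $\partial \loss/\partial \mY_{:, i}$ and $\partial \losshat/\partial \mX_{:, i}$ are quantities computed through different networks and cannot be equated. I would therefore be careful to cite the assumption precisely where the substitution is made, and otherwise let the $\|\xhat_{:, i}\|^2 = B$ cancellation do the rest of the work.
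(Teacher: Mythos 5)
Your proposal is correct and follows essentially the same route as the paper's own proof: the paper also starts from the \OURSV backward-pass formula (Lemma~\ref{lemma:oursv_gradient_x}), reduces to a single feature coordinate, uses $\|\xhat_{:,i}\|^2 = B$ to collapse the cross and square terms of the expanded norm, and invokes Assumption~\ref{ass:assumption_for_lipschitz_constant_of_pnv} to replace $\partial\loss/\partial\mY_{:,i}$ by $\partial\losshat/\partial\mX_{:,i}$. The only differences are cosmetic (you substitute the assumption before expanding the norm rather than after, and you keep the explicit column index where the paper sets $d=1$).
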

See the proof in Appendix~\ref{sec:theoretical_proof}.
Note that $\langle\frac{\partial \losshat}{\partial \mX_{:, i}}, \frac{\xhat_{:, i}}{\sqrt B}\rangle^2$ is non-negative, and hence the Lipschitz constant of $\loss$ is smaller than that of $\losshat$ if $\gamma_i\leq (\psib)_i$.
This is what we observe in practice, as shown in Appendix \ref{sec:extra_results}.

\subsection{Running Statistics in Training}
\label{sec:using_running_stats}

\begin{algorithm}[!ht]
\DontPrintSemicolon
\caption{Power Normalization (Every Iteration)}
\label{alg:power_normalization}
    \SetAlgoLined
    
    \Begin(\textbf{Forward Propagation:}){
    
    \KwInput{$\mX\in\mR^{B\times d}$}
    \KwOutput{$\mY\in\mR^{B\times d}$}
    
    $\psib^2 = \frac{1}{B}\sum_{i=1}^B \xi^2$ \tcp*{Get mini-batch statistics}
    
    $\xhat = \frac{\mX}{\psi}$ \tcp*{Normalize}
    
    $\mY = \gamma\odot\xhat + \beta$ \tcp*{Scale and shift}
    
    \vspace{0.1in}
    
    $\psi^2 = \alpha\psi^2 + (1-\alpha)\psib^2$ \tcp*{Update running statistics}
    }

    \Begin(\textbf{Backward Propagation:}){
    
    \KwInput{$\frac{\partial \loss}{\partial \mY}\in\mR^{B\times d}$}
    \KwOutput{$\frac{\partial \loss}{\partial \mX}\in\mR^{B\times d}$}

    $\frac{\partial \loss}{\partial \xhat} = \gamma\odot\frac{\partial \loss}{\partial \mY}$ \tcp*{Intermediate Gradient}
    
    $\xtilde' = \frac{\partial \loss}{\partial \xhat} - \nu\xhat$ \tcp*{Intermediate Estimated Gradient}
    
    $\frac{\partial \loss}{\partial \mX} = \frac{\xtilde'}{\psi}$ \tcp*{Gradient of $\mX$} 
    
    $\nu = \nu(1-(1-\alpha)\Gamma) + (1-\alpha)\Lambda$ \tcp*{See Definition~\ref{def:power_normalization} for $\Gamma$ and $\Lambda$}
    }
    
    \vspace{0.1in}
    \hrulefill\\
    \vspace{-0.001in}
    \hrulefill

    \textbf{Inference: } $\mY=\gamma\odot\frac{\mX}{\psi}+\beta$
    
\end{algorithm}

Here, we discuss the second modification in \OURS. 
First note that even though \transpnv outperforms \transbn, it still can not match the performance of LN. 
This could be related to the larger number of outliers present in $\psib$, as shown in \fref{fig:bn-pn-fwd-bwd}.  
A straightforward solution to address this is to use running statistics for the quadratic mean (denoted as $\psi^2$), instead of using per batch statistics, since the latter
changes in each iteration. 
However, using running statistics requires modification of the backpropagation, which we described below.

\begin{definition}[\OURS] 
\label{def:power_normalization}
Denote the inputs/statistics at the t-th iteration by $\cdot^{(t)}$, e.g., $\mX^{(t)}$ is the input data at t-th iteration. In the forward propagation, the following equations are used for the calculation:
\small
\begin{align}
    \xhat^{(t)} &= \frac{\mX^{(t)}}{\psi^{(t-1)}}, \label{eq:forward_xhat}\\ 
    \mY^{(t)}   &= \gamma\odot\xhat^{(t)} + \beta, \label{eq:forward_y}\\ 
    (\psi^{(t)})^2  &= (\psi^{(t-1)})^2 + (1-\alpha)(\psib^2 - (\psi^{(t-1)})^2). \label{eq:forward_psi}
\end{align}
\normalsize
Here, $0<\alpha<1$ is the moving average coefficient in the forward propagation, and $\psib$ is the statistic for the current batch. 
Since the forward pass evolves running statistics, the backward propagation cannot be accurately computed---namely, the accurate gradient calculation needs to track back to the first iteration. 
Here, we propose to use the following approximated gradient in backward propagation:
\small
\begin{align}
    (\xtilde^{(t)})' &= \frac{\partial \loss}{\partial \xhat^{(t)}} - \nu^{(t-1)}\odot\xhat^{(t)}, \label{eq:new_grad}\\ 
    \frac{\partial \loss}{\partial \mX^{(t)})} &= \frac{(\xtilde^{(t)})'}{\psi^{(t-1)}}  \label{eq:input_gradient_replace}, \\
    \nu^{(t)} &= \nu^{(t-1)}(1-(1-\alpha)\Gamma^{(t)}) + (1-\alpha)\Lambda^{(t)},
\end{align}
\normalsize
where $\Gamma^{(t)}=\frac{1}{B}\sum_{i=1}^B\xhati^{(t)} \xhati^{(t)}$ and $\Lambda^{(t)} = \frac{1}{B}\sum_{i=1}^B\frac{\partial \loss}{\partial \xhati^{(t)}}\xhati^{(t)}$. 
\end{definition}
This backpropagation essentially uses running statistics by computing the gradient of the loss w.r.t. the quadratic mean of the current batch, rather than using the computationally infeasible method of computing directly the gradient w.r.t. running statistics of the quadratic mean.
Importantly, this formulation leads to bounded gradients which is necessary for convergence, as shown below.

\begin{mytheorem}[Gradient of $\loss$ w.r.t. $\mX$ is bounded in \OURS] 
\label{thm:gradient_xtilde_bound}
For any datum point of $\xtilde$ (i.e. $\xtilde_{i,:}$), the gradients computed from~\eref{eq:new_grad} are bounded by a constant.

Furthermore, the gradient of $\mX_{i,:}$ is also bounded, as given~\eref{eq:input_gradient_replace}.
\end{mytheorem}
See the proof in Appendix \ref{sec:theoretical_proof}. 
The pseudo-code for \OURS algorithm is presented in~\aref{alg:power_normalization}.

\section{Results}
\label{sec:results}

\subsection{Experiment Setup}
We compare our \OURS method with LN and BN for a variety of sequence modeling tasks: Neural Machine Translation (MT); and Language Modeling (LM).
We implement our code for MT using \textit{fairseq-py}~\cite{ott2019fairseq}, and~\cite{ma2019tensorized} for LM tasks.
For a fair comparison, we directly replace the LN in transformers (\transln) with BN (\transbn) or \OURS (\transpn) without varying the position of each normalization layer or changing the training hyperparameters. 

For all the experiments, we use the pre-normalization setting in~\cite{wang2019learning}, where the normalization layer is located right before the multi-head attention module and point-wise feed-forward network module. 
Following~\cite{wang2019learning}, we generally increase the learning rate by a factor of 2.0, relative to the common post-normalization transformer~\cite{vaswani2017attention}. 
Below we discuss tasks specific settings. \footnote{More detailed experimental settings and comparisons between normalization methods are provided in Appendix~\ref{sec:training_details}, \ref{more_comparison_normalization}. }

\paragraph{Neural Machine Translation} 
We evaluate our methods on two widely used public datasets: IWSLT14 German-to-English (De-En) and WMT14 English-to-German (En-De) dataset. 
We follow the settings reported in~\cite{ott2018scaling}. 
We use transformer \texttt{big} architecture for WMT14 (4.5M sentence pairs) and \texttt{small} architecture for IWSLT14 (0.16M sentence pairs). 
For inference, we average the last 10 checkpoints, and we set the length penalty to 0.6/1.0, and the beam size to 4/5 for WMT/IWSLT, following~\cite{ott2019fairseq}. 
All the other hyperparamters (learning rate, dropout, weight decay, warmup steps, etc.) are set identically to the ones reported in the
literature for LN (i.e., we use the same hyperparameters for BN/\OURS).

\paragraph{Language Modeling} 
We experiment on both PTB~\cite{mikolov2011empirical} and Wikitext-103~\cite{merity2016pointer}, which contain 0.93M and 100M tokens, respectively. 
We use three layers tensorized transformer core-1 for PTB and six layers tensorized transformer core-1 for Wikitext-103, following~\cite{ma2019tensorized}. 
Furthermore, we apply the multi-linear attention mechanism with masking, and we report the final testing set perplexity~(PPL).%
\footnote{We also report the validation perplexity in Appendix~\ref{sec:extra_results}.}

\begin{table}
\centering
\centerline{
\resizebox{\columnwidth}{!}{\begin{tabular}{lcccc}
\toprule
\multirow{2}{*}{\textbf{Model}} & \textbf{IWSLT14} & \multicolumn{1}{c}{\textbf{WMT14}}      \\ 
 & \texttt{small} & \texttt{big} \\
\midrule
Transformer \cite{vaswani2017attention} & 34.4 & 28.4 \\
DS-Init \cite{zhang2019improving}       & 34.4 & 29.1 \\
Fixup-Init \cite{zhang2019fixup}     & 34.5 & 29.3 \\
Scaling NMT \cite{ott2018scaling}       & /    & 29.3 \\
Dynamic Conv \cite{wu2019pay}           & 35.2 & 29.7 \\
Transformer + LayerDrop \cite{fan2019reducing} & /    & 29.6\\
\midrule
Pre-Norm \transln               & 35.5 & 29.5 \\
Pre-Norm \transbn               & 34.4 & 28.1 \\
Pre-Norm \transpnv              & 35.5 & 28.5 \\
Pre-Norm \transpn               & \textbf{35.9} & \textbf{30.1} \\
\bottomrule
\end{tabular}  } 
}\caption{MT performance (BLEU) on IWSLT14 De-En and WMT14 En-De testsets.
Using \OURSV instead of BN significantly improves the performance, but LN still outperforms. 
However, \OURS achieves much higher BLEU scores, as compared to LN. 
}
\label{table:translation}
\end{table}

\subsection{Experiment Results}

\paragraph{Neural Machine Translation} 
\label{sec:result_mt} 
We use BLEU~\cite{papineni2002bleu} as the evaluation metric for MT.
Following standard practice, we measure tokenized case-sensitive BLEU and case-insensitive BLEU for
WMT14 En-De and IWSLT14 De-En, respectively. 
For a fair comparison, we do not include other external datasets.
All the transformers in~\tref{table:translation} are using six encoder layers and six decoder layers.

The results  are reported in~\tref{table:translation}. 
In the first
section of rows, we report state-of-the-art results for these two tasks with comparable model sizes.
In the second section of rows, we report the results with different types of normalization.
Notice the significant drop in BLEU score when BN is used (34.4/28.1), as opposed to LN (35.5/29.5).
Using \OURSV instead of BN helps reduce this gap, but LN still outperforms.
However, the results corresponding to \OURS exceeds LN results by more than 0.4/0.6 points,
This is significant for these tasks.
Comparing with other concurrent works like DS-Init and Fixup-Init~\cite{zhang2019improving,zhang2019fixup}, the improvements in \transpn are still significant. 

\paragraph{Language Modeling}
\label{sec:result_lm}

\begin{table}[pthb]\small
  \renewcommand\arraystretch{1.2}
  \centering
  \resizebox{\columnwidth}{!}{\begin{tabular}{lcccc}
    \toprule[1pt]
    \multirow{2}{*}{\textbf{Model}}& \multicolumn{1}{c}{\textbf{PTB}} & \multicolumn{1}{c}{\textbf{WikiText-103}}\\
    \cline{2-3}
    &Test PPL&Test PPL\\
    \hline
    Tied-LSTM~\cite{inan2016tying} & 48.7 & 48.7 \\
    AWD-LSTM-MoS~\cite{yang2017breaking} & 56.0 & 29.2 \\
    \hline
    Adaptive Input~\cite{baevski2018adaptive} & 57.0  & 20.5\\
    Transformer-XL$_{\texttt{base}}$ \cite{dai2019transformer} & 54.5 & 24.0\\
    Transformer-XL$_{\texttt{large}}$ \cite{dai2019transformer} & -- & 18.3 \\
    Tensor-Transformer$_{\text{1core}}$ \cite{ma2019tensorized} & 57.9 & 20.9\\
    Tensor-Transformer$_{\text{2core}}$ \cite{ma2019tensorized} & 49.8 & 18.9\\
    \hline
    Tensor-Transformer$_{\text{1core}}$ + LN & 53.2* & 20.9* \\
    Tensor-Transformer$_{\text{1core}}$ + BN & 60.7  & 27.2 \\
    Tensor-Transformer$_{\text{1core}}$ +~\OURSV & 55.3 & 21.3 \\
    Tensor-Transformer$_{\text{1core}}$ +~\OURS & \textbf{47.6} & \textbf{17.9} \\
    \bottomrule[1pt]
  \end{tabular}}
  \caption{
  Results with state-of-the-art methods on PTB and WikiText-103. '-' indicates no reported results in that setting, '$*$' indicates the results are from our own implementation. 
  \OURS achieves 5.6/3 points lower testing PPL on PTB and WikiTest-103, respectively, compared to LN. 
  } 
  \label{table:language_model}
\end{table}

We report the LM results in~\tref{table:language_model}, using the tensorized transformer proposed in~\cite{ma2019tensorized}. 
Here we observe a similar trend. Using BN results in a significant degradation, increasing testing PPL by more than 7.5/6.3 for PTB/WikiText-103 datasets
(achieving 60.7/27.2 as opposed to 53.2/20.9).
However, when we incorporate the \OURS normalization, we achieve state-of-the-art results for these two tasks (for these model sizes and without
any pre-training on other datasets). In particular, \OURS results in 5.6/3 points lower testing PPL, as compared to LN.
Importantly, note that using \OURS we achieve better results than \cite{ma2019tensorized}, with the same number of parameters.

\subsection{Analysis}
\label{sec:analysis}
\paragraph{The Effect of Batch Size for Different Normalization}
\begin{figure}[!htp]
\begin{center}
  \includegraphics[width=.9\linewidth]{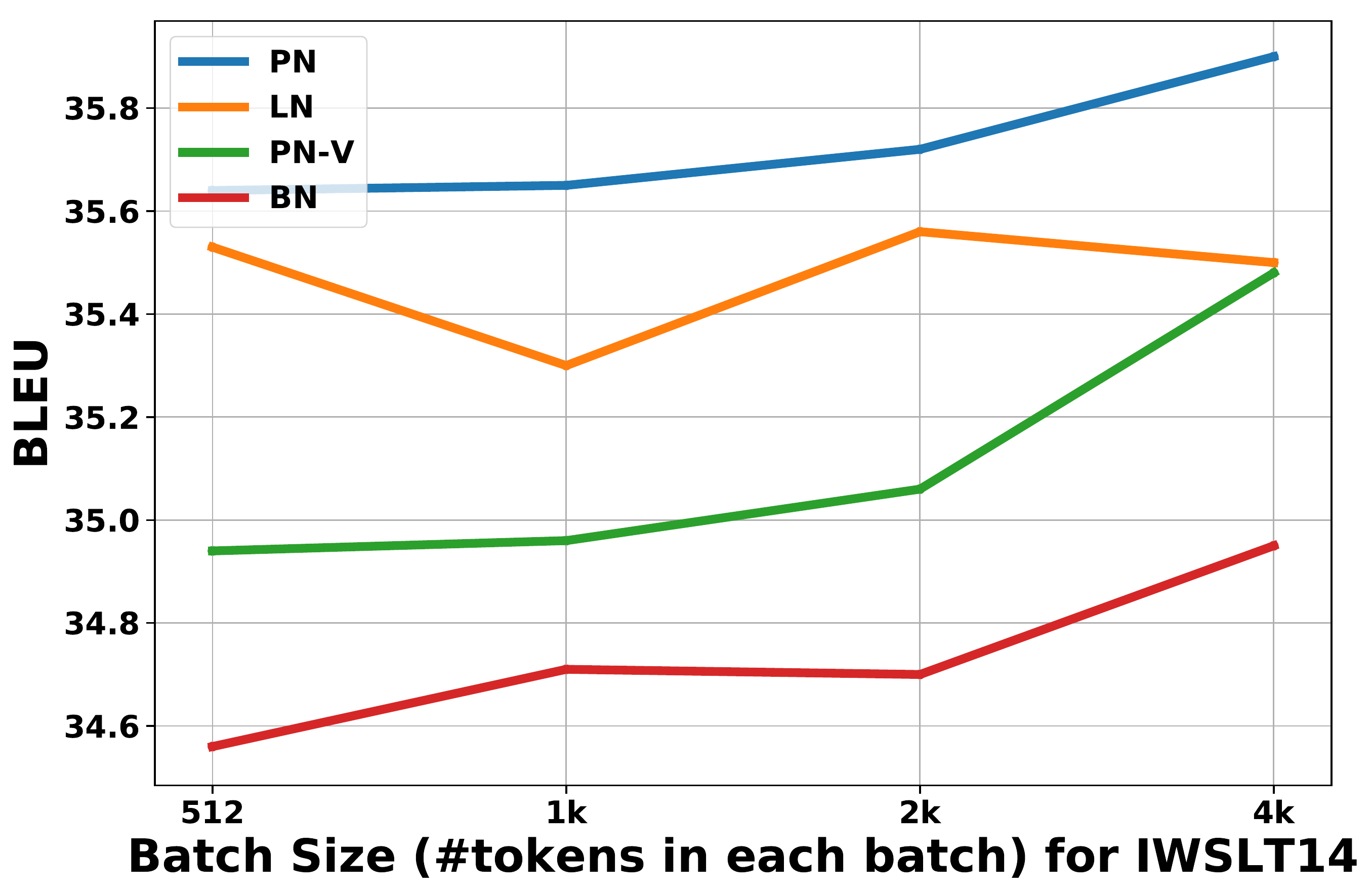}
\end{center}
 \caption{\footnotesize Ablation study of the performance of \OURS, \OURS-V, LN and BN on IWSLT14 trained using different batch sizes. 
 Note that the performance of \OURS consistently outperforms LN. 
 In the meanwhile, \OURSV can only match the result of LN when mini-batch gets to 4K. 
 Among all the settings, BN behaves poorly and abnormally across different mini-batches.
 }
\label{figure:ablation}
\end{figure}

To understand better the effects of our proposed methods \OURS and \OURSV,
we change the batch size used to collect statistics in BN, LN, and \OURS.
To this end, we keep the total batch size constant at 4K tokens,
and we vary the mini-batch size used to collect statistics from 512 to 4K.
Importantly, note that we keep the total batch size constant at 4K, and we use gradient
accumulation for smaller mini-batches. For example, for the case with mini-batch of 512, we use
eight gradient accumulations.
The results are reported in~\fref{figure:ablation}.
We can observe that BN behaves poorly and abnormally across different mini-batches.
Noticeably, after relaxing the zero-mean normalization in BN and replacing the variance estimation with quadratic mean,
\OURSV matches the performance of LN for 4K mini-batch and consistently outperforms BN. However, it underperforms LN.
In contrast, we can see that \OURS consistently achieves higher results under different mini-batch settings.

\paragraph{Representation Power of learned Embedding}
To investigate further the performance gain of \OURS, we compute the Singular Value Decomposition of the embedding layers, as proposed by~\cite{gao2019representation}, which argued that the singular value distribution could be used as a proxy for measuring representational power of the embedding layer.
It has been argued that having fast decaying singular values leads to limiting the representational power of the embeddings to a small sub-space. 
If this is the case, then it may be preferable to have a more uniform singular value distribution~\cite{wang2020improving}.
We compute the singular values for word embedding matrix of LN and \OURS, and we report the results in~\fref{figure:ln-pn-emb-svd}. 
It can be clearly observed that the singular values corresponding to \OURS decay more slowly than those of LN. 
Intuitively, one explanation for this might be that \OURS helps by normalizing all the tokens across the batch dimension, which can result in a more equally distributed embeddings.
This may illustrate one of the reasons why \OURS outperforms LN.

\begin{figure}[!htp]
\begin{center}
  \includegraphics[width=.9\linewidth]{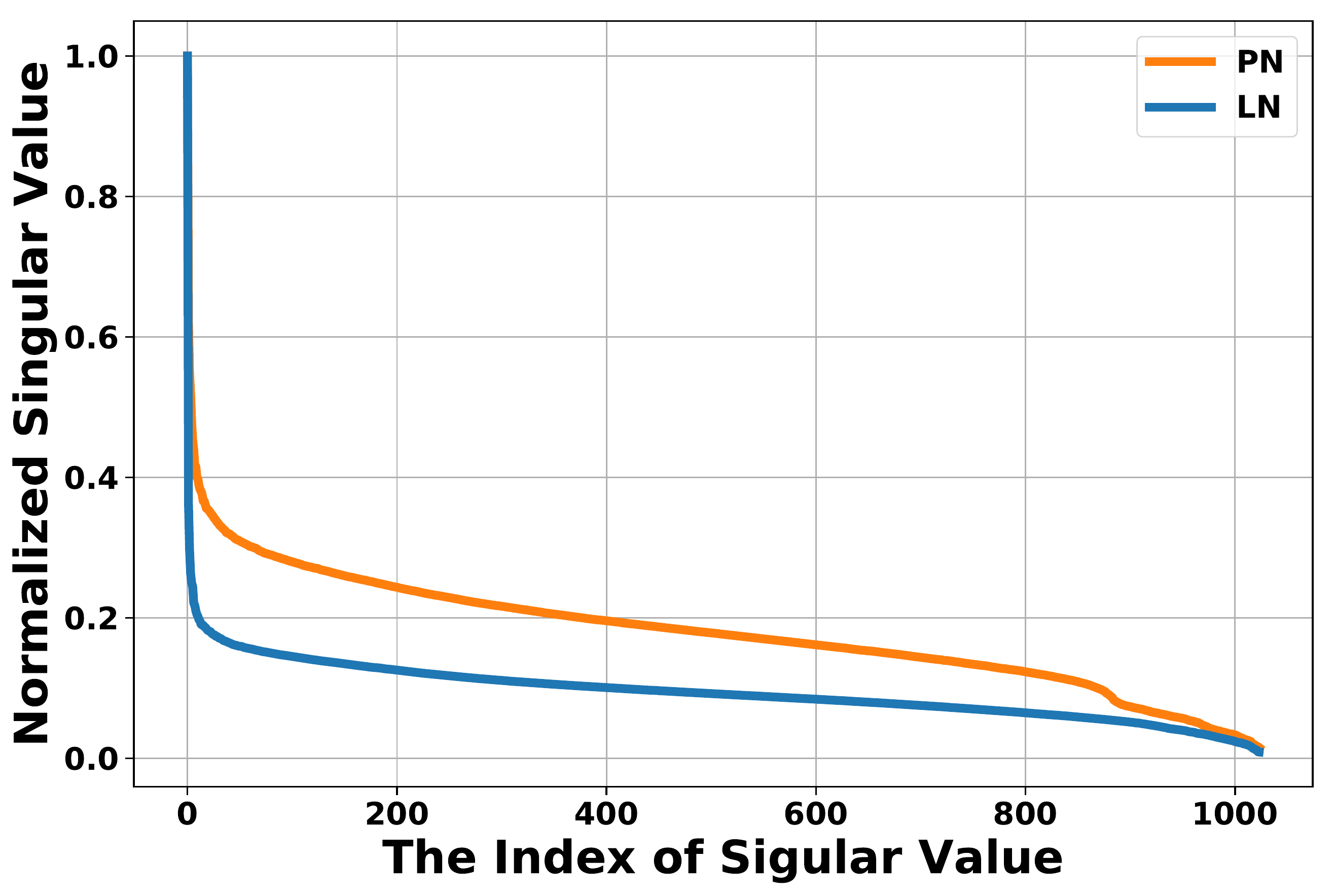}
\end{center}
 \caption{\footnotesize Singular values of embedding matrix trained with LN/\OURS~on WMT14. We normalize the singular values of each matrix so that they are comparable with the largest one as 1. 
 Note that the singular values corresponding to \OURS decay more slowly than those of LN.
 }
\label{figure:ln-pn-emb-svd}
\end{figure}
\section{Conclusion}
\label{sec:conclusion}

In this work, we systematically analyze the ineffectiveness of vanilla batch normalization (BN) in transformers.
Comparing NLP and CV, we show evidence that the batch statistics in transformers on NLP tasks have larger variations. 
This further leads to the poor performance of BN in transformers. 
By decoupling the variations into FP and BP computation, we propose \OURSV and \OURS to alleviate the variance issue of BN in NLP. 
We also show the advantages of \OURSV and \OURS, both theoretically and empirically. 
Theoretically, \OURSV preserves the first-order smoothness property as in BN.
The approximate backpropagation of \OURS leads to bounded gradients. 
Empirically, we show that \OURS outperforms LN in neural machine translation (0.4/0.6 BLEU on IWSLT14/WMT14) and language modeling (5.6/3.0 PPL on PTB/WikiText-103) by a large margin.
We also conduct further analysis of the effect of \OURSV/\OURS/BN/LN under different batch size settings to show the significance of statistical estimations, and we investigate the representation power of learned embeddings matrix by LN/\OURS to illustrate the effectiveness of~\OURS.

\section*{Acknowledgments}
This work was supported by funds from Intel and Samsung.
We are grateful to support from Google Cloud, Google TFTC team, as well as  support from the Amazon AWS.
We would like to acknowledge ARO, DARPA, NSF, and ONR for providing partial support of this~work.
We are also grateful to Zhuohan Li, Zhen Dong, Yang Liu, the members of Berkeley NLP, and the members of the Berkeley RISE Lab for their valuable feedback.
\bibliography{journal-abbrv,ref}
\bibliographystyle{icml2020}

 \onecolumn
\appendix
\section{Training Details}
\label{sec:training_details}
\subsection{Machine Translation.}
\paragraph{Dataset} 
The training/validation/test sets for the IWSLT14 dataset contain about 153K/7K/7K sentence pairs, respectively. 
We use a vocabulary of 10K tokens based on a joint source and target byte pair encoding (BPE)~\cite{sennrich2016neural}. 
For the WMT14 dataset, we follow the setup of~\cite{vaswani2017attention}, 
which contains 4.5M training parallel sentence pairs. 
Newstest2014 is used as the test set, and Newstest2013 is used as the validation set. 
The 37K vocabulary for WMT14 is based on a joint source and target BPE factorization. 
\paragraph{Hyperparameter} 
Given the unstable gradient issues of decoders in NMT~\cite{zhang2019improving}, we only change all the normalization layers in the 6 encoder layers from LN to BN/\OURS, and we keep all the 6 decoder layers to use LN. 
For \transpnv\bg and \transbn\bg (not \transpn\bg), we use the synchronized version, where each FP and BP will synchronize the mean/variance/quadratic mean of different batches at different nodes.  
For \OURS, we set the $\alpha$ in the forward and backward steps differently, and we tune the best setting over 0.9/0.95/0.99 on the validation set. 
To control the scale of the activation, we also involve a layer-scale layer~\cite{zhang2019root} in each model setting before the normalization layer. 
The warmup scheme for accumulating $\psi$ is also employed, as suggested in~\cite{yan2020towards} . 
Specifically, we do not tune the warmup steps, but we set it identical to the warmup steps for the learning rate schedule in the optimizer~\cite{vaswani2017attention}. 
We set dropout as 0.3/0.0 for Transformer \bg/\sm model, respectively. 
We use the Adam optimizer and follow the optimizer setting and learning rate schedule in~\cite{wang2019learning}. 
We set the maximum number of updates following~\cite{ott2018scaling} to be 300k for WMT and 100k for IWSLT. We used early stopping to stop the experiments by showing no improvement over the last 10/5 epochs. 
For the \bg model, we enlarge the batch size and learning rate, as suggested in~\cite{ott2019fairseq}, to accelerate training. 
We employ label smoothing of value $\epsilon_\text{ls} = 0.1$ in all experiments.
We implement our code for MT using \textit{fairseq-py}~\cite{ott2019fairseq}. 

\paragraph{Evaluation} 
We use BLEU\footnote{https://github.com/moses-smt/mosesdecoder/blob/master/scripts/generic/multi-bleu.perl}~\cite{papineni2002bleu} as the evaluation metric for MT.
Following standard practice, we measure tokenized case-sensitive BLEU and case-insensitive BLEU for WMT14 En-De and IWSLT14 De-En, respectively. 
For a fair comparison, we do not include other external datasets. 
For inference, we average the last 10 checkpoints, and we set the length penalty to 0.6/1.0 and beam size to 4/5 for WMT/IWSLT, following~\cite{ott2019fairseq}. 

\subsection{Language Modeling.}
\paragraph{Dataset} 
PTB~\cite{mikolov2011empirical} has 0.93M training tokens, 0.073M validation words, and 0.082M test word. 
Wikitext-103~\cite{merity2016pointer} contains 0.27M unique tokens, and 100M training tokens from 28K articles, with an average length of 3.6K tokens per article. 
We use the same evaluation scheme that was provided in~\cite{dai2019transformer}. 
\paragraph{Hyperparameter} 
We use three layers tensorized transformer core-1 for PTB and six layers tensorized transformer core-1 for Wikitext-103, following~\cite{ma2019tensorized}.
This means there exists only one linear projection in multi-linear attention. 
We replace every LN layer with a \OURS layer. 
For \OURS, we set the $\alpha$ in forward and backward differently, and we tune the best setting over 0.9/0.95/0.99 on the validation set. 
The warmup scheme and layer-scale are also the same as the hyperparameter setting introduced for machine translation. 
We set the dropout as 0.3 in all the datasets. 
The model is trained using 30 epochs for both PTB and WikiText-103. 
We use the Adam optimizer, and we follow the learning rate setting in~\cite{ma2019tensorized}. 
We set the warmup steps to be 4000 and label smoothing to be $\epsilon_\text{ls} = 0.1$ in all experiments.

\section{Extra Results}
\label{sec:extra_results}
\subsection{Empirical Results for Lemma~\ref{lemma:lipschitz_constant_of_pnv}.}
Under Assumption~\ref{ass:assumption_for_lipschitz_constant_of_pnv}, mentioned in~\sref{sec:rm_mean} and discussed in Appendix~\ref{sec:proof_pnv_all}, we show 
\begin{equation}
    \|\frac{\partial \loss}{\partial \mX_{:, i}}\|^2 = \frac{\gamma^2_i}{(\psib)_i^2}\big(\|\frac{\partial \losshat}{\partial \mX_{:, i}}\|^2 -  \langle\frac{\partial \losshat}{\partial \mX_{:, i}}, \frac{\xhat_{:, i}}{\sqrt B}\rangle^2\big). \nonumber
\end{equation}
\normalsize

Given that $\langle\frac{\partial \losshat}{\partial \mX_{:, i}}, \frac{\xhat_{:, i}}{\sqrt B}\rangle^2$ is non-negative, the Lipschitz constant of $\loss$ is smaller than that of $\losshat$ if $\gamma_i\leq (\psib)_i$. 
Here, we report the empirical results to show that $\gamma_i\leq (\psib)_i$ holds for each $i\in\{1, 2, ..., d\}$ on IWSLT14; see~\fref{figure:gama-empirical}. 
Observe that the Lipschitz constant of $\loss$ is smaller than that of $\losshat$ empirically in our setting.

\begin{figure}[!htp]
\begin{center}
  \includegraphics[width=.9\linewidth]{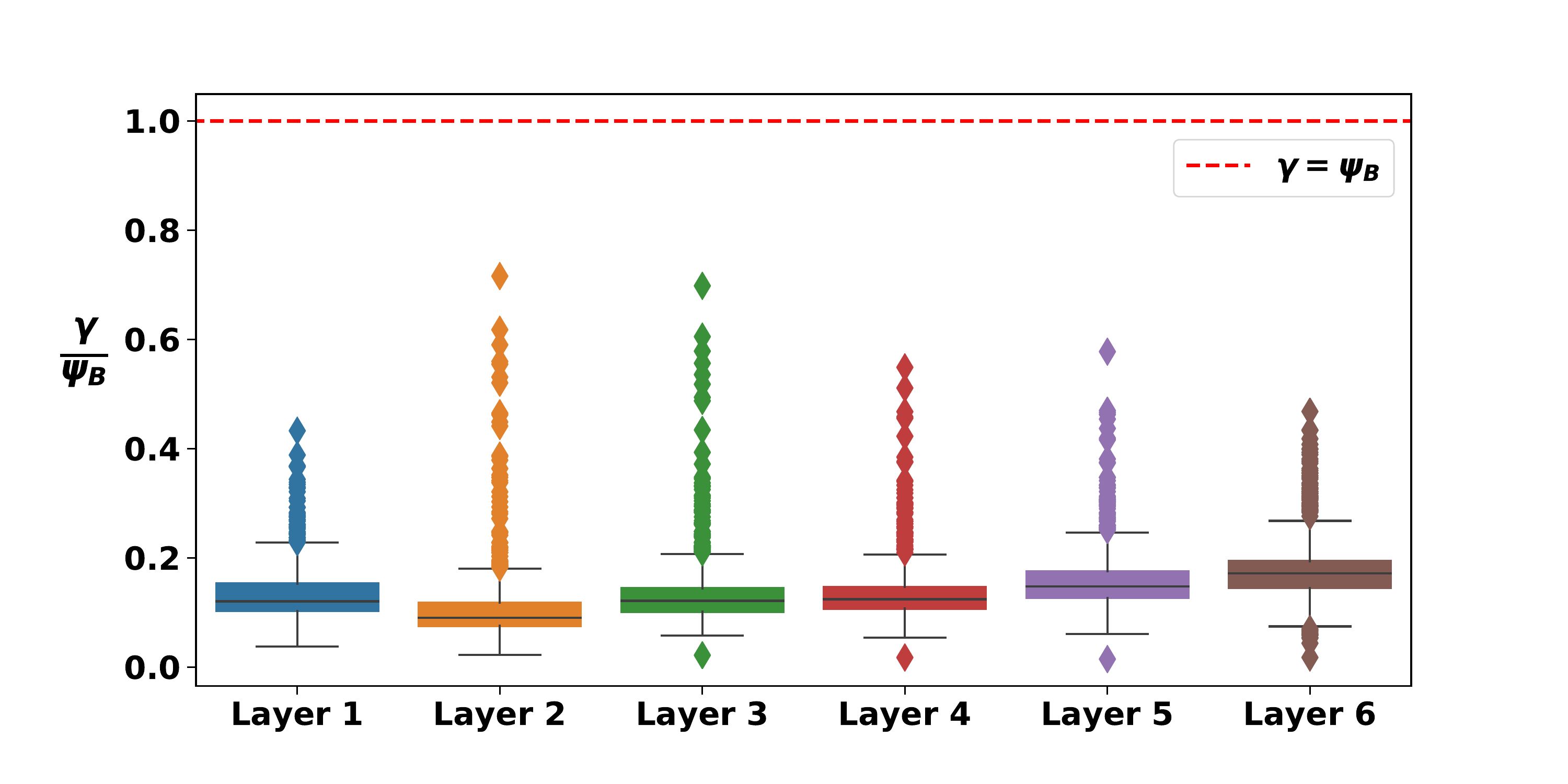}
\end{center}
 \caption{\footnotesize The empirical results of the distribution of $\frac{\gamma}{(\psib)} \in \sR^d$ in different layers of \transpnv on IWSLT14. Given that  $\gamma_i\leq (\psib)_i$ holds for each $i\in\{1, 2, ..., d\}$, \textbf{Lemma \ref{lemma:lipschitz_constant_of_pnv}} holds as well. 
 }
\label{figure:gama-empirical}
\end{figure}
\subsection{Validation Results on Language Modeling.}
\begin{table}[pthb]\small
  \centering
  {\begin{tabular}{lcccccc}
    \toprule[1pt]
    \multirow{2}{*}{\textbf{Model}}& \multicolumn{2}{c}{\textbf{PTB}} & \multicolumn{2}{c}{\textbf{WikiText-103}}\\
    \cline{2-5} 
    &Val PPL&Test PPL&Val PPL&Test PPL\\
    \hline
    Tied-LSTM~\cite{inan2016tying} & 75.7 & 48.7 & -- & 48.7 \\
    AWD-LSTM-MoS~\cite{yang2017breaking} & 58.1 & 56.0 & 29.0 & 29.2 \\
    \hline
    Adaptive Input~\cite{baevski2018adaptive} & 59.1 & 57.0  & 19.8 & 20.5\\
    Transformer-XL$_{\texttt{base}}$ \cite{dai2019transformer} & 56.7 & 54.5 & 23.1 & 24.0\\
    Transformer-XL$_{\texttt{large}}$ \cite{dai2019transformer} & -- & -- & -- & 18.3 \\
    Tensor-Transformer$_{\text{1core}}$ \cite{ma2019tensorized} & 55.4 & 57.9 & 23.6 & 20.9\\
    Tensor-Transformer$_{\text{2core}}$ \cite{ma2019tensorized} & 54.3 & 49.8 & 19.7 & 18.9\\
    \hline
    Tensor-Transformer$_{\text{1core}}$ + LN & 58.0* & 53.2* & 22.7* & 20.9* \\
    Tensor-Transformer$_{\text{1core}}$ + BN & 71.7 & 60.7 & 28.4 & 27.2 \\
    Tensor-Transformer$_{\text{1core}}$ +~\OURSV & 59.7 & 55.3 & 23.6 & 21.3 \\
    Tensor-Transformer$_{\text{1core}}$ +~\OURS & \textbf{51.6} & \textbf{47.6} & \textbf{18.3} & \textbf{17.9} \\
    \bottomrule[1pt]
  \end{tabular}}
  \caption{
  Additional Validation and Test results with state-of-the-art results on PTB and WikiText-103. '-' indicates no reported results in that setting, '$*$' indicates that the results are from our own implementation. 
  \OURS achieves 5.6/3.0 points lower testing PPL on PTB and WikiTest-103, respectively, as compared to LN. 
  } 
  \label{table:language_model_app}
\end{table}

\subsection{More Comparisons.}\label{more_comparison_normalization}
As shown in \ref{normalization_more_compare}, we present more comparison results for different normalization method including Moving Average Batch Normalizatio (MABN) \citep{yan2020towards}, Batch Renormalization (BRN) \cite{ioffe2017batch}, and Group Normalization (GN) \cite{wu2018group}. 
We can observe that BRN/MABN/PN-V is better than BN but worse than LN, which suggests the small batch-size setting (main focus of \cite{yan2020towards,ioffe2017batch,wu2018group}) may have similar characteristic of the setting in NLP, where there exists large variance across batches. 
Obviously, GN performs the best among the previous proposed methods given LN can be viewed as the special case of GN (group number as 1).  \footnote{We empirically found that setting group number the same as head number leads to the best performance.} 
Throughout the comparisons, PN still performs the best in the two tasks, which may validate the effectiveness of our method. 
\begin{table}\label{normalization_more_compare}
    \centering
    \begin{tabular}{lc|c}
    \toprule
    {\textbf{Model}} & \textbf{IWSLT14} & \textbf{PTB}  \\ 
    \midrule
    \transbn                                       & 34.4  & 60.7 \\
    Transformer$_{\textsc{BRN}}$ & 34.7  & 58.3 \\
    Transformer$_{\textsc{MABN}}$ & 34.9  & 57.2 \\
    \transln       & 35.5  & 53.2               \\
    Transformer$_{\textsc{GN}}$ & 35.7  & 51.7 \\
    \midrule
    \transpnv                                    & 35.5    & 55.3  \\
    \transpn                                    & \bf{35.9}    & \bf{47.6}  \\
    \bottomrule
    \end{tabular}
    \caption{
    (Left) NMT performance (BLEU) on IWSLT14 De-En. 
    (Right) LM performance (Test PPL) on PTB.
    }
\end{table}

\section{Theoretical Results}
\label{sec:theoretical_proof}
In this section, we discuss the theoretical results on BN and \OURS. 
We assume $\gamma$ and $\beta$ to be constants for our analysis on BN, \OURSV and \OURS. 

Since the derivative of loss $\loss$ w.r.t. $\mY$ is known as $ \frac{\partial \loss}{\partial \mY}$, 
trivially, we will have $\gamma \odot\frac{\partial \loss}{\partial \xcheck} = \frac{\partial \loss}{\partial \mY}$. 
Also, it is not hard to get the following fact.
\begin{fact} \label{fact:derivative_mu_sigma}
The derivatives of $\mub$ and $\sigmab^2$ w.r.t. $\xi$ are
\begin{equation}
    \frac{\partial \mub}{\partial \xi} = \frac{1}{B}~~~\text{and}~~~\frac{\partial \sigma^2}{\partial \xi} = \frac{2}{B}(\xi - \mub).
\end{equation}

\end{fact}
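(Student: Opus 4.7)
The plan is to prove both identities by direct componentwise differentiation from the definitions given in Algorithm~\ref{alg:batch_normalization}, namely $\mub = \tfrac{1}{B}\sum_{k=1}^B \vx_k$ and $\sigmab^2 = \tfrac{1}{B}\sum_{k=1}^B(\vx_k - \mub)^2$. Since both identities are stated entrywise and all operations in the definitions act coordinate-wise, it suffices to fix an arbitrary feature index and prove the scalar version; the vector statement then follows by stacking coordinates.

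For the mean, I would simply observe that $\mub = \tfrac{1}{B}\sum_{k=1}^B \vx_k$ is linear in $\vx_i$: only the $k=i$ term depends on $\vx_i$, and that term contributes $\vx_i/B$. Differentiating yields $\partial \mub / \partial \vx_i = 1/B$, as claimed. No chain rule is needed here.

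For the variance, the subtlety is that $\mub$ itself depends on $\vx_i$, so I would expand
\begin{equation}
    \frac{\partial \sigmab^2}{\partial \vx_i} = \frac{1}{B}\sum_{k=1}^B 2(\vx_k - \mub)\left(\frac{\partial \vx_k}{\partial \vx_i} - \frac{\partial \mub}{\partial \vx_i}\right),
\end{equation}
use $\partial \vx_k/\partial \vx_i = \delta_{ki}$ together with the mean derivative from the previous step, and split the sum into the $k=i$ contribution and the remainder. The remainder simplifies using $\sum_{k=1}^B (\vx_k - \mub) = 0$, which is the defining property of the centered batch. What survives is $\tfrac{2}{B}(\vx_i - \mub)$, matching the claim.

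There is essentially no obstacle here: the only thing to be careful about is not forgetting the chain-rule contribution of $\mub$ inside $\sigmab^2$, and noting that its net contribution vanishes by the zero-sum property of the centered residuals. Everything else is bookkeeping, and the vector identity follows immediately because $\mub$, $\sigmab^2$, and $\vx_i$ are handled independently along each of the $d$ feature coordinates.
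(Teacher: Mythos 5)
Your derivation is correct and complete: the paper states this Fact without proof ("it is not hard to get"), and your componentwise chain-rule computation, using $\partial \vx_k/\partial \vx_i = \delta_{ki}$ together with the cancellation $\sum_{k=1}^B(\vx_k-\mub)=0$ for the variance term, is exactly the intended elementary argument. No gaps.
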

We are now ready to show the derivative of $\loss$ w.r.t. $\xi$ under \bn. 

\begin{lemma}[Derivative of $\loss$ w.r.t. $\xi$ in \BN]
\label{lemma:bn_gradient_x}
Based on the Fact~\ref{fact:derivative_mu_sigma}, it holds that
\begin{equation}
    \frac{\partial \loss}{\partial \xi} = \frac{1}{\sigmab}\frac{\partial \loss}{\partial \xchecki} - \frac{1}{\sigmab B}\sum_{j \in B}\frac{\partial \loss}{\partial \xcheckj}(1+\xcheckj\xchecki).
\end{equation}
\end{lemma}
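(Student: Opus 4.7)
The plan is to apply the multivariate chain rule to $\xcheckj = (\xj - \mub)/\sigmab$, where the input $\xi$ influences $\xcheckj$ in three ways: directly (only when $j=i$), through the batch mean $\mub$, and through the batch standard deviation $\sigmab$. Concretely, I would write
\begin{equation*}
\frac{\partial \loss}{\partial \xi} \;=\; \sum_{j=1}^{B}\frac{\partial \loss}{\partial \xcheckj}\,\frac{\partial \xcheckj}{\partial \xi},
\end{equation*}
and then reduce the entire problem to computing the Jacobian $\partial \xcheckj/\partial \xi$.

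Next, I would expand this Jacobian into the three contributions above:
\begin{equation*}
\frac{\partial \xcheckj}{\partial \xi} \;=\; \frac{\mathbf{1}_{\{j=i\}}}{\sigmab} \;-\; \frac{1}{\sigmab}\,\frac{\partial \mub}{\partial \xi} \;-\; \frac{\xj-\mub}{\sigmab^{2}}\,\frac{\partial \sigmab}{\partial \xi}.
\end{equation*}
Using Fact~\ref{fact:derivative_mu_sigma}, $\partial \mub/\partial \xi = 1/B$. For the standard deviation, I would apply the chain rule $\partial \sigmab/\partial \xi = (2\sigmab)^{-1}\,\partial\sigmab^{2}/\partial\xi = (\xi-\mub)/(B\sigmab) = \xchecki/B$. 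Substituting gives $\partial \xcheckj/\partial \xi = \mathbf{1}_{\{j=i\}}/\sigmab - 1/(B\sigmab) - \xcheckj\,\xchecki/(B\sigmab)$.

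Finally, I would plug this Jacobian back into the chain-rule sum. The indicator term isolates a single summand $(1/\sigmab)\,\partial\loss/\partial\xchecki$, and the two remaining $1/B$ terms combine cleanly into the advertised $(1+\xcheckj\xchecki)$ factor inside the sum, yielding the claimed identity. I do not expect any genuine obstacle: the argument is pure chain rule with the element-wise (per-feature) convention. The one place to be careful is in treating $\sigmab$ (rather than $\sigmab^{2}$) as the denominator and correctly applying the square-root chain rule so that the factor of $\xchecki/B$ emerges with the right sign; once that is done the bookkeeping across the sum is routine, and the matching form in equation~\eqref{eq_m:gradient_loss_x_in_bn} is recovered after absorbing the $\gamma$ factor from $\partial\loss/\partial\xcheck = \gamma\odot\partial\loss/\partial\mY$.
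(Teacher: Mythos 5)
Your proposal is correct and follows essentially the same route as the paper: a chain-rule decomposition of $\partial \xcheckj/\partial \xi$ into the direct term, the $\mub$ term, and the $\sigmab$ term, using Fact~\ref{fact:derivative_mu_sigma}. The only cosmetic difference is that you differentiate through $\sigmab$ via the square-root chain rule while the paper works directly with $\partial \xcheckj/\partial \sigmab^2$; both yield the same factor $\xcheckj\xchecki/(B\sigmab)$ and the identical final identity.
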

\begin{proof} Based on chain rule, we will have
\begin{equation}
    \begin{split}
        \frac{\partial \loss}{\partial \xi} 
        &= \frac{\partial \loss}{\partial \xchecki} \frac{\partial\xchecki}{\partial\xi} + \sum_{j\in B}(\frac{\partial \loss}{\partial \xcheckj} \frac{\partial\xcheckj}{\partial\mub}\frac{\partial\mub}{\partial\xi} + \frac{\partial \loss}{\partial \xcheckj} \frac{\partial\xcheckj}{\partial\sigmab}\frac{\partial\sigmab}{\partial\xi}) \\
        &= \frac{1}{\sigmab}\frac{\partial \loss}{\partial \xchecki} + \sum_{j\in B}\frac{\partial \loss}{\partial \xcheckj}( \frac{\partial\xcheckj}{\partial\mub}\frac{1}{B} + \frac{\partial\xcheckj}{\partial\sigmab^2}\frac{2}{B}(\xi - \mub))\\
        &= \frac{1}{\sigmab}\frac{\partial \loss}{\partial \xchecki} - \frac{1}{\sigmab B}\sum_{j \in B}\frac{\partial \loss}{\partial \xcheckj}(1+\frac{\xi-\mub}{\sigmab}\frac{\xj-\mub}{\sigmab})\\
        &= \frac{1}{\sigmab}\frac{\partial \loss}{\partial \xchecki} - \frac{1}{\sigmab B}\sum_{j \in B}\frac{\partial \loss}{\partial \xcheckj}(1+\xcheckj\xchecki).
    \end{split}
\end{equation}
\end{proof}

Replacing $\frac{\partial \loss}{\partial \xcheck}$ by $\gamma \odot \frac{\partial \loss}{\partial \mY}$, we can get~\eref{eq_m:gradient_loss_x_in_bn}.

In the following, we will first discuss the theoretical properties of \OURSV in Appendix~\ref{sec:proof_pnv_all}; and then we discuss how to use running statistics in the forward propagation and how to modify the corresponding backward propagation in Appendix~\ref{sec:proof_pn_all}.

\subsection{Proof of \OURSV}
\label{sec:proof_pnv_all}

Before showing the gradient of $\loss$ w.r.t. $\xi$ under \OURSV, we note the following fact, which is not hard to establish.
\begin{fact} \label{fact:derivative_psi}
The derivatives of $\psib$ w.r.t. $\xi$ are,
\begin{equation}
    \frac{\partial \psib^2}{\partial \xi} = \frac{2}{B}\xi.
\end{equation}
\end{fact}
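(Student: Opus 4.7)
The plan is to read the derivative directly off the definition $\psib^2 = \frac{1}{B}\sum_{k=1}^B \vx_k^2$, where squaring is understood elementwise on each row $\vx_k \in \sR^d$, consistent with the paper's notation from Section~\ref{sec:bkg_batch_normazliation}. Since the rows $\vx_k$ for $k \neq i$ are independent of $\xi$, only the $k = i$ term in the sum contributes when differentiating with respect to $\xi$. For that single term, the elementwise derivative of $\xi^2$ with respect to $\xi$ is $2\xi$. Multiplying by the $\frac{1}{B}$ prefactor yields $\frac{2}{B}\xi$, which is the claimed identity.

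The proof is essentially one line, so the only thing worth being careful about is the convention on differentiation. I would note that the expression $\frac{\partial \psib^2}{\partial \xi}$ is interpreted componentwise, matching the convention used in Fact~\ref{fact:derivative_mu_sigma} for the BN statistics $\mub$ and $\sigmab^2$; in particular, the $d$-dimensional output is obtained by taking the diagonal of the Jacobian of elementwise squaring. With this convention fixed, there is no obstacle: the derivation mirrors the variance calculation for $\sigmab^2$ but without the centering term $-\mub$, which eliminates the cross derivatives that arise in the BN case and leaves only the pure quadratic contribution $\frac{2}{B}\xi$.
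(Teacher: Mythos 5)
Your proof is correct and takes the same (essentially one-line) route the paper has in mind: the paper states this fact without proof as "not hard to establish," and the direct elementwise differentiation of $\psib^2 = \frac{1}{B}\sum_{k=1}^B \vx_k^2$, picking out only the $k=i$ term, is exactly that computation. Your remark on the componentwise convention is consistent with how the paper uses this derivative in the chain rule for Lemma~\ref{lemma:oursv_gradient_x}.
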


With the help of Fact~\ref{fact:derivative_psi}, we can prove the following lemma
\begin{lemma}[Derivative of $\loss$ w.r.t. $\xi$ in \OURSV]
\label{lemma:oursv_gradient_x}
Based on the Fact~\ref{fact:derivative_psi}, it holds that that
\begin{equation}
    \frac{\partial \loss}{\partial \xi} = \frac{1}{\psib}\frac{\partial \loss}{\partial \xhati} - \frac{1}{B\psib}\sum_{j\in B}\frac{\partial \loss}{\partial \xhatj} \xhatj\xhati.
\end{equation}
\begin{proof} Based on chain rule, we will have
\begin{equation}
    \begin{split}
        \frac{\partial \loss}{\partial \xi} 
        &= \frac{\partial \loss}{\partial \xhati} \frac{\partial\xhati}{\partial\xi} + \sum_{j\in B}\frac{\partial \loss}{\partial \xhatj} \frac{\partial \xhatj}{\partial \psib^2}\frac{\partial\psib^2}{\partial \xi} \\ 
        &= \frac{\partial \loss}{\partial \xhati} \frac{\partial\xhati}{\partial\xi} + \sum_{j\in B}\frac{\partial \loss}{\partial \xhatj} (-\frac{1}{2}\frac{\xj}{\psib^3})\frac{2\xi}{B} \\
        &= \frac{1}{\psib}\frac{\partial \loss}{\partial \xhati} - \frac{1}{B\psib}\sum_{j\in B}\frac{\partial \loss}{\partial \xhatj} \xhatj\xhati.
    \end{split}
\end{equation}
    
\end{proof}
\end{lemma}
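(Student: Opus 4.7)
The plan is to apply the multivariate chain rule to $\loss$ viewed as a composition: $\loss$ depends on the normalized rows $\{\xhatj\}_{j=1}^{B}$, each of which is a function of the input row $\xi$. The crucial observation is that $\xi$ enters \emph{every} $\xhatj$, not just $\xhati$, because $\xhatj = \xj/\psib$ and $\psib$ is a batch statistic depending on the entire batch. Consequently the total derivative must sum over all $j$ in the batch.

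Concretely, I would start from
$$
\frac{\partial \loss}{\partial \xi} \;=\; \sum_{j=1}^{B}\frac{\partial \loss}{\partial \xhatj}\,\frac{\partial \xhatj}{\partial \xi},
$$
and then decompose $\partial \xhatj/\partial \xi$ into a direct piece (differentiating the numerator $\xj$) and an indirect piece routed through $\psib$ in the denominator. The direct piece contributes $\delta_{ij}/\psib$ and thus survives the sum only when $j=i$, producing the first term $\tfrac{1}{\psib}\tfrac{\partial \loss}{\partial \xhati}$. For the indirect piece I would write it as $\tfrac{\partial \xhatj}{\partial \psib^{2}}\tfrac{\partial \psib^{2}}{\partial \xi}$, compute $\tfrac{\partial \xhatj}{\partial \psib^{2}} = -\tfrac{1}{2}\xj\psib^{-3}$ by elementary calculus, and invoke Fact~\ref{fact:derivative_psi} to substitute $\tfrac{\partial \psib^{2}}{\partial \xi} = \tfrac{2}{B}\xi$. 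Multiplying these gives $-\tfrac{\xj \xi}{B\psib^{3}}$ per summand, and a final cosmetic step rewrites $\xj/\psib = \xhatj$ and $\xi/\psib = \xhati$, collapsing the indirect contribution into $-\tfrac{1}{B\psib}\sum_{j} \tfrac{\partial \loss}{\partial \xhatj}\,\xhatj\,\xhati$, exactly as claimed.

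There is no substantial obstacle here; the argument is pure chain rule plus the precomputed Fact~\ref{fact:derivative_psi}. The only conceptual trap is the temptation to treat $\psib$ as independent of $\xi$ and write $\partial \xhati/\partial \xi = 1/\psib$ naively, which would drop the entire summation term. Being disciplined about the batch-level coupling through $\psib$ (and noting that, unlike the \bn case, there is no $\mub$-contribution because \OURSV omits zero-mean recentering) is what makes the formula come out with a single correction term rather than two, matching the structure advertised in~\eref{eq_m:gradient_loss_x_in_pnv}.
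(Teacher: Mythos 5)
Your proposal is correct and follows essentially the same route as the paper's proof: a chain-rule decomposition into the direct term through the numerator of $\xhati$ (surviving only for $j=i$) and the batch-coupled term routed through $\psib^2$, using $\frac{\partial \xhatj}{\partial \psib^2}=-\frac{1}{2}\xj\psib^{-3}$ together with Fact~\ref{fact:derivative_psi}. The simplification to $\xhatj\xhati$ at the end matches the paper's computation exactly.
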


Replacing $\frac{\partial \loss}{\partial \xhat}$ by $\gamma \odot \frac{\partial \loss}{\partial \mY}$, we can get~\eref{eq_m:gradient_loss_x_in_pnv}. 

In order to show the effect of \OURSV on the Lipschitz constant of the loss, we make the following standard assumption, as in~\cite{santurkar2018does}. 
\begin{assumption}
\label{ass:assumption_for_lipschitz_constant_of_pnv}
Denote the loss of the non-normalized neural network, which has the same architecture as the \OURSV normalized neural network, as $\losshat$. We assume that
\begin{equation}
    \frac{\partial \loss}{\partial \yi} = \frac{\partial \losshat}{\partial \xi},
\end{equation}
where $\yi$ is the i-th row of $\mY$. 
\end{assumption}

Based on these results, we have the following proof of Lemma~\ref{lemma:lipschitz_constant_of_pnv}, which was stated in Section~\ref{sec:method}.

\begin{proof}[Proof of Lemma~\ref{lemma:lipschitz_constant_of_pnv}] Since all the computational operator of the derivative is element-wise, here we consider $d=1$ for notational simplicity\footnote{For $d\geq 2$, we just need to separate the entry and prove them individually.}. 
When $d=1$, Lemma~\ref{lemma:oursv_gradient_x} can be written as 
\begin{equation}
    \frac{\partial \loss}{\partial \xi} = \frac{1}{\psib}\frac{\partial \loss}{\partial \xhati} - \frac{1}{B\psib}\langle\frac{\partial \loss}{\partial \xhat}, \xhat\rangle \xhati.
\end{equation}
Therefore, we have 
\begin{equation}
    \frac{\partial \loss}{\partial \mX} = \frac{1}{\psib}\frac{\partial \loss}{\partial \xhat} - \frac{1}{B\psib}\langle\frac{\partial \loss}{\partial \xhat}, \xhat\rangle \xhat. 
\end{equation}
Since 
\begin{equation}
    \|\xhat\|^2 = \frac{\sum_{i \in B} \xhati}{\frac{1}{B}\sum_{i \in B} \xhati} = B,
\end{equation}
 the following equation can be obtained
\begin{equation}
\begin{split}
    \|\frac{\partial \loss}{\partial \mX}\|^2 
    & = \frac{1}{\psib^2}\|\frac{\partial \loss}{\partial \xhat} - \langle\frac{\partial \loss}{\partial \xhat}, \frac{\xhat}{\sqrt B}\rangle \frac{\xhat}{\sqrt B}\|^2 \\
    & = \frac{1}{\psib^2}\big(\|\frac{\partial \loss}{\partial \xhat}\|^2 - 2 \langle\frac{\partial \loss}{\partial \xhat},  \langle\frac{\partial \loss}{\partial \xhat}, \frac{\xhat}{\sqrt B}\rangle \frac{\xhat}{\sqrt B}\rangle + \|\langle\frac{\partial \loss}{\partial \xhat}, \frac{\xhat}{\sqrt B}\rangle \frac{\xhat}{\sqrt B}\rangle\|^2\big) \\
    & = \frac{1}{\psib^2}\big(\|\frac{\partial \loss}{\partial \xhat}\|^2 -  \langle\frac{\partial \loss}{\partial \xhat}, \frac{\xhat}{\sqrt B}\rangle^2\big) \\
    & = \frac{\gamma^2}{\psib^2}\big(\|\frac{\partial \loss}{\partial \mY}\|^2 -  \langle\frac{\partial \loss}{\partial \mY}, \frac{\xhat}{\sqrt B}\rangle^2\big) \\ 
    & = \frac{\gamma^2}{\psib^2}\big(\|\frac{\partial \losshat}{\partial \mX}\|^2 -  \langle\frac{\partial \losshat}{\partial \mX}, \frac{\xhat}{\sqrt B}\rangle^2\big).
\end{split}
\end{equation}
\end{proof}

\subsection{Proof of \OURS}
\label{sec:proof_pn_all}

In order to prove that after the replacement of $\frac{\partial \loss}{\partial (\mX^{(t)})}$ with~\eref{eq:input_gradient_replace},
the gradient of the input is bounded, we need the following assumptions.
\begin{assumption}\label{ass:xhati_derivative_bound}
We assume that 
\begin{equation}
    \|\xhati\| \leq C_1~~~\text{and}~~~\|\frac{\partial \loss}{\partial \xhati}\| \leq C_2,
\end{equation}
for all input datum point and all iterations. 
We also assume that the exponentially decaying average of each element of $\xhati$ is bounded away from zero,
\begin{equation}
    (1-\alpha)\sum_{j=0}^t \alpha^{t-j} \xhati\xhati > C_3 > 0,  {~~\forall t},
\end{equation}
where we denote $\alpha$ as the decay factor for the backward pass.
In addition, we assume that $\alpha$ satisfies
\begin{equation}
    (C_1)^2 < \frac{1}{1-\alpha}. 
\end{equation}
\end{assumption}
W.l.o.g., we further assume that every entry of $\psi^{(t)}$ is bounded below, i.e.
\begin{align}
    C_0 < \psi^{(t)},~~~\forall t.
\end{align}

If we can prove or $\nu^{(t)}$ is bounded by some constant $C_4$ (the official proof is in Lemma~\ref{lemma:final_lemma}), then it is obvious to prove the each datum point of $\xtilde'$ is bounded. 

Based on these results, we have the following proof of Theorem~\ref{thm:gradient_xtilde_bound}, which was stated in Section~\ref{sec:method}.

\begin{proof}[Proof of Theorem~\ref{thm:gradient_xtilde_bound}]
It is easy to see that
\begin{align*}
    \|\xtilde'_{i,:}\|^2 
    &= \|\frac{\partial \loss}{\partial \xhati^{(t)}} - \nu^{(t-1)}\xhati^{(t)}\|^2 \\
    &= \langle\frac{\partial \loss}{\partial \xhati^{(t)}} - \nu^{(t-1)}\xhati^{(t)}, \frac{\partial \loss}{\partial \xhati^{(t)}} - \nu^{(t-1)}\xhati^{(t)}\rangle \\
    &= \|\frac{\partial \loss}{\partial \xhati^{(t)}}\|^2 + \|\nu^{(t-1)}\xhati^{(t)}\|^2 -2\langle\frac{\partial \loss}{\partial \xhati^{(t)}}, \nu^{(t-1)}\xhati^{(t)}\rangle \\ 
    &\leq \|\frac{\partial \loss}{\partial \xhati^{(t)}}\|^2 + \|\nu^{(t-1)}\|^2\|\xhati^{(t)}\|^2 -2\langle\frac{\partial \loss}{\partial \xhati^{(t)}}, \nu^{(t-1)}\xhati^{(t)}\rangle \\
    &\leq \|\frac{\partial \loss}{\partial \xhati^{(t)}}\|^2 + \|\nu^{(t-1)}\|^2\|\xhati^{(t)}\|^2 + 2\|\frac{\partial \loss}{\partial \xhati^{(t)}}\| \|\nu^{(t-1)}\xhati^{(t)}\| \\
    &\leq \|\frac{\partial \loss}{\partial \xhati^{(t)}}\|^2 + \|\nu^{(t-1)}\|^2\|\xhati^{(t)}\|^2 + 2\|\frac{\partial \loss}{\partial \xhati^{(t)}}\| \|\nu^{(t-1)}\|\|\xhati^{(t)}\| \\
    & \leq (C_2)^2 + (C_1)^2 (C_4)^3 + C_1C_2C_4 \\
\end{align*}
All these inequalities come from Cauchy-Schwarz inequity and the fact that 
\[
(a_1b_1)^2 + ... + (a_db_d)^2 \leq (a_1^2 + ... + a_d^2)(b_1^2 + ... + b_d^2). 
\]
\end{proof}

In the final step of Theorem~\ref{thm:gradient_xtilde_bound}, we directly use that $\nu^{(t)}$ is uniformly bounded (each element of $\nu^{(t)}$ is bounded) by $C_4$. 
The exact proof is shown in below. 

\begin{lemma}
\label{lemma:final_lemma}
Under Assumption~\ref{ass:xhati_derivative_bound}, $\nu^{(t)}$ is uniformly bounded.
\begin{proof}
For simplicity, denote $\frac{\partial \loss}{\partial \xhati}$ as $\xhati'$. It is not hard to see, 
\begin{align*}
    \|\Gamma^{(t)}\|^2
    &= \frac{1}{B^2}\|\sum_{i=1}^B\xhati^{(t)}\xhati^{(t)}\|^2 \\
    &= \frac{1}{B^2}\langle\sum_{i=1}^B\xhati^{(t)}\xhati^{(t)}, \sum_{i=1}^B\xhati^{(t)}\xhati^{(t)}\rangle \\
    &\leq \frac{1}{B^2}(B^2 \max_{j}\{\langle\xhatj^{(t)}\xhatj^{(t)}, \xhati^{(t)}\xhati^{(t)}\rangle\}) \\ 
    &\leq (C_1)^2.
\end{align*}
Similarly, we will have $\|\Lambda^{(t)}\|\leq C_1C_2$ as well as $(1-\alpha)\sum_{j=0}^t \alpha^{t-j} \Gamma^{(j)}\Gamma^{(j)} > C_3$. 
We have 
\begin{align*}
    \nu^{(t)} 
    &= (1-(1-\alpha)\Gamma^{(t)})\nu^{(t-1)} + (1-\alpha)\Lambda^{(t)} \\ 
    &= (1-(1-\alpha)\Gamma^{(t)})((1-(1-\alpha)\Gamma^{(t-1)})\nu^{(t-2)} + (1-\alpha)\Lambda^{(t-1)}) + (1-\alpha)\Lambda^{(t)} \\ 
    & \vdots \\ 
    & = (1-\alpha) \sum_{j=0}^t \big( \prod_{k=0}^{j-1}(1-(1-\alpha)\Gamma^{(t-k+1)})\big)\Lambda^{(t-j)}.
\end{align*}
Then, 
\begin{align*}
    \frac{1}{(1-\alpha)^2}\|\nu^{(t)}\|^2
    =& \langle \sum_{j=0}^t \big( \prod_{k=0}^{j-1}(1-(1-\alpha)\Gamma^{(t-k+1)})\big)\Lambda^{(t-j)}, \sum_{j=0}^t \big( \prod_{k=0}^{j-1}(1-(1-\alpha)\Gamma^{(t-k+1)})\big)\Lambda^{(t-j)}\rangle. \\
\end{align*}
Notice that with the definition,
\begin{equation}
\Gamma^{(m)}= \frac{1}{B}\sum_{i=1}^B\xhati^{(m)}\xhati^{(m)},
\end{equation}
we will have that all entries of $\Gamma^{(m)}$ are positive, for $m\in \{0, 1,...,t\}$. 
It is clear that when all entries of $\Lambda^{(m)}$, for $m\in \{0, 1,...,t\}$, have the same sign (positive or negative), the above equation achieves its upper bound. W.l.o.g., we assume they are all positive. 

Since $0<\alpha<1$, it is easy to see that, when $K=\ceil{(\log({\frac{(1-\alpha)C_3}{2C_1C_1}})/\log(\alpha))}$, then the following inequality holds,
\begin{equation}\label{eq:alpha_constant}
    (1-\alpha)\sum_{j=K}^\infty \alpha^j < \frac{C_3}{2C_1C_1}.
\end{equation}
Since $\|\Gamma^{(k)}\| \leq C_1$, the value of any entry of $\Gamma^{(k)}$ is also bounded by $C_1$.
Therefore, based on this and \eref{eq:alpha_constant}, when $t>K$, we will have 
\begin{equation}
\begin{split}
    (1-\alpha)\sum_{k=t-K+1}^t\alpha^{t-k}~\Gamma^{(k)}\Gamma^{(k)}
    &= (1-\alpha) \sum_{k=0}^t \alpha^{t-k}~\Gamma^{(k)}\Gamma^{(k)} - (1-\alpha) \sum_{k=0}^{t-K} \alpha^{t-k}~\Gamma^{(k)}\Gamma^{(k)} \\
    &> C_3\1 - (1-\alpha) \sum_{k=0}^{t-K} \alpha^{t-k}~\|\Gamma^{(k)}\|\|\Gamma^{(k)}\| \\
    & > C_3\1 - (1-\alpha)C_1C_1 \1 \sum_{k=0}^{t-K} \alpha^{t-k} \\
    & = C_3\1 - (1-\alpha)C_1C_1 \1 \sum_{k=K}^{t} \alpha^k \\
    &>  C_3\1 - (1-\alpha)C_1C_1 \1 \sum_{k=K}^{\infty} \alpha^k \\ 
    & > C_3\1 - \frac{C_3\1}{2} \\ 
    &=\frac{C_3\1}{2}, \label{eq:bound_C5}
\end{split}
\end{equation}
where $\1$ is the unit vector. 
Then, for $t>K$, we can bound from below the arithmetic average of the $K$ corresponding items of $\Gamma$,
\begin{equation}
\begin{split}
    (C_1)^2 > \frac{1}{K}\sum_{k=0}^{K-1} \Gamma^{(t-k)}\Gamma^{(t-k)} 
    &> \frac{1}{\alpha^{K-1}}\sum_{k=0}^{K-1}\alpha^k\Gamma^{(t-k)}\Gamma^{(t-k)} \\
    &=\frac{1}{\alpha^{K-1}}\sum_{k=t-K+1}^{t}\alpha^{t-1} \Gamma^{(k)}\Gamma^{(k)} \\ 
    &> \frac{C_3}{2(1-\alpha)\alpha^{K-1}} = C_5 > 0.
\end{split}
\end{equation}
This inequality shows that after the first K items, for any K consecutive $\Gamma^{(k)}$, the average of them will exceeds a constant number, $C_5$. 
Therefore, for any $t> T > K$, we will have
\begin{equation}\label{eq:bound_consective_K}
    \frac{1}{T-K}\sum_{k=0}^{T-K}\Gamma^{(t-k)}\Gamma^{(t-k)} > \floor{\frac{T-K}{K}}(K\frac{1}{T-K})C_5 > \frac{C_5}{2}.
\end{equation}

Let us split $\sum_{j=0}^t \big( \prod_{k=0}^{j-1}(1-(1-\alpha)\Gamma^{(t-k+1)})\big)\Lambda^{(t-j)}$ into two parts: (i) $\sum_{j=K}^{t} \big( \prod_{k=0}^{j-1}(1-(1-\alpha)\Gamma^{(t-k+1)})\big)\Lambda^{(t-j)}$, and (ii) $\sum_{j=0}^{K-1} \big( \prod_{k=0}^{j-1}(1-(1-\alpha)\Gamma^{(t-k+1)})\big)\Lambda^{(t-j)}$. 
From so on, we will discuss how we deal with these two parts respectively.

\paragraph{Case 1: $\sum_{j=K}^{t} \big( \prod_{k=0}^{j-1}(1-(1-\alpha)\Gamma^{(t-k+1)})\big)\Lambda^{(t-j)}$}
Notice that for $0< a_j < 1$, the following inequality can be proven with simply induction,
\begin{equation}
    \prod_{j=0}^{k-1} (1-a_j) \leq (1-\frac{1}{k}\sum_{j=0}^{k-1}\alpha_j)^k. 
\end{equation}
Replacing $a_j$ with $(1-\alpha)\Gamma^{(t-j+1)}$, we will have 
\begin{equation}
    \begin{split}
        \sum_{j=K}^{t} \big( \prod_{k=0}^{j-1}(1-(1-\alpha)\Gamma^{(t-k+1)})\big)\Lambda^{(t-j)} 
        &\leq \sum_{j=K}^{t} \big( (1-\frac{(1-\alpha)}{j}\sum_{k=0}^{j-1}\Gamma^{(t-k+1)})\big)^j\Lambda^{(t-j)} \\
        &\leq \sum_{j=K}^{t} \big( (1-(1-\alpha)\frac{C_5}{2})\big)^j\Lambda^{(t-j)} \\ 
        &\leq \sum_{j=K}^{t} \big( (1-(1-\alpha)\frac{C_5}{2})\big)^jC_1C_2 \\ 
        & \leq \frac{2}{(1-\alpha)C_5}C_1C_2 = C_6.
    \end{split}
\end{equation}
Here the second inequality comes from~\eref{eq:bound_consective_K}, and the third inequality comes form the the fact each entry of $\Lambda^{(m)}$ is smaller than $C_1C_2$, given $\|\Lambda^{(m)}\|\leq C_1C_2$. The final inequality comes from~\eref{eq:bound_C5}, where $0 < C_5 < (C_1)^2 < 1/(1-\alpha)$, then we can have $0 < \big(1-(1-\alpha)C_5/{2}\big)<1$.

\paragraph{Case 2: $\sum_{j=0}^{K-1} \big( \prod_{k=0}^{j-1}(1-(1-\alpha)\Gamma^{(t-k+1)})\big)\Lambda^{(t-j)}$}
It is easy to see
\begin{equation}
    \begin{split}
        \sum_{j=0}^{K-1} \big( \prod_{k=0}^{j-1}(1-(1-\alpha)\Gamma^{(t-k+1)})\big)\Lambda^{(t-j)} 
        & \leq \sum_{j=0}^{K-1} \big( \prod_{k=0}^{j-1}(\1)\big)\Lambda^{(t-j)} \\
        & \leq KC_1C_2.
    \end{split}
\end{equation}

Combining Case 1 and 2, we have
\begin{equation}
    \begin{split}
        \frac{1}{(1-\alpha)^2}\|\nu^{(t)}\|^2
    &= \langle \sum_{j=0}^t \big( \prod_{k=0}^{j-1}(1-(1-\alpha)\Gamma^{(t-k+1)})\big)\Lambda^{(t-j)}, \sum_{j=0}^t \big( \prod_{k=0}^{j-1}(1-(1-\alpha)\Gamma^{(t-k+1)})\big)\Lambda^{(t-j)}\rangle \\
    &\leq \langle C_6\1 + KC_1C_2\1, C_6\1 + KC_1C_2\1 \rangle < C_7, 
    \end{split}
\end{equation}
which indicates $\|\nu^{(t)}\|$ is bounded and $C_4 = (1-\alpha)\sqrt{C_7}$. 

\end{proof}

\end{lemma}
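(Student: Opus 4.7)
The plan is to reduce Theorem \ref{thm:gradient_xtilde_bound} to the claim that the auxiliary vector $\nu^{(t)}$ stays uniformly bounded across iterations, and then use Assumption \ref{ass:xhati_derivative_bound} together with the lower bound on $\psi^{(t)}$ to finish both conclusions. Concretely, once we know $\|\nu^{(t)}\| \le C_4$ for some constant $C_4$, the first half of the theorem is a one-line Cauchy–Schwarz estimate on
\begin{equation*}
    \|\xtilde'_{i,:}\|^2 = \|\tfrac{\partial \loss}{\partial \xhati^{(t)}} - \nu^{(t-1)} \xhati^{(t)}\|^2,
\end{equation*}
expanding the inner product and applying $\|\xhati\| \le C_1$ and $\|\partial \loss/\partial \xhati\| \le C_2$. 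The second half then follows from dividing by $\psi^{(t-1)}$ and using $\psi^{(t)} > C_0 > 0$.

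The real work is proving the uniform boundedness of $\nu^{(t)}$, which is the step I expect to dominate the proof. First I would bound $\|\Gamma^{(t)}\| \le C_1^2$ and $\|\Lambda^{(t)}\| \le C_1 C_2$ entrywise by Cauchy–Schwarz. Next I would unroll the recursion to express
\begin{equation*}
    \nu^{(t)} = (1-\alpha)\sum_{j=0}^{t} \Big(\prod_{k=0}^{j-1}\bigl(1-(1-\alpha)\Gamma^{(t-k+1)}\bigr)\Big)\Lambda^{(t-j)}.
\end{equation*}
The difficulty is that the product factors $\bigl(1-(1-\alpha)\Gamma^{(t-k+1)}\bigr)$ are only guaranteed to be contractive on average — individual $\Gamma^{(k)}$ can be nearly zero. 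So the crux is converting the lower bound $(1-\alpha)\sum_{j=0}^t \alpha^{t-j}\xhati\xhati > C_3$ in Assumption \ref{ass:xhati_derivative_bound} into a statement that, after the first $K$ iterations, any window of $K$ consecutive $\Gamma^{(k)}$'s has entrywise average at least some $C_5 > 0$.

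I would then choose $K = \lceil \log\bigl(\tfrac{(1-\alpha)C_3}{2C_1^2}\bigr)/\log(\alpha)\rceil$ so that the tail $(1-\alpha)\sum_{j=K}^\infty \alpha^j$ is small enough that stripping it off from the assumed lower bound leaves at least $C_3/2$ on a window of length $K$, giving the desired $C_5$. Here the condition $C_1^2 < 1/(1-\alpha)$ ensures $0 < 1-(1-\alpha)C_5/2 < 1$, which is what makes the geometric bound work. With the window estimate in hand, I would split the unrolled sum for $\nu^{(t)}$ into an initial block of $K$ terms (bounded trivially by $K C_1 C_2$ entrywise) and a tail for $j \ge K$; on the tail I apply the AM–GM-type inequality $\prod_{j=0}^{k-1}(1-a_j) \le (1-\tfrac{1}{k}\sum a_j)^k$ with $a_j = (1-\alpha)\Gamma^{(t-j+1)}$ and invoke the window average bound to get a geometric decay, whose sum is $O\bigl(1/((1-\alpha)C_5)\bigr)$.

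Combining both blocks yields $\|\nu^{(t)}\| \le C_4$ uniformly in $t$, which feeds back into the Cauchy–Schwarz bound on $\xtilde'_{i,:}$ described at the start and, via division by $\psi^{(t-1)} > C_0$, into boundedness of $\partial \loss/\partial \mX_{i,:}$. The main obstacle, as noted, is the window-averaging argument: translating the exponentially weighted lower bound on $\xhati \xhati$ into a pointwise lower bound on sliding-window averages of $\Gamma^{(k)} \Gamma^{(k)}$, so that the product in the recursion is effectively contractive even though individual factors may not be.
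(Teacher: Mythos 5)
Your proposal follows essentially the same route as the paper's proof: the same entrywise bounds on $\Gamma^{(t)}$ and $\Lambda^{(t)}$, the same unrolling of the recursion for $\nu^{(t)}$, the same choice of window length $K$, the same conversion of the exponentially weighted lower bound in Assumption~\ref{ass:xhati_derivative_bound} into a sliding-window average bound $C_5$, and the same head/tail split with the AM--GM product inequality giving geometric decay on the tail. The only deviations are cosmetic (e.g., your constant $\|\Gamma^{(t)}\|\le C_1^2$ versus the paper's $C_1$, and your packaging of the lemma inside the proof of Theorem~\ref{thm:gradient_xtilde_bound}), so the argument matches the paper's.
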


\end{document}